\newtheorem{thm}{Theorem}
\newtheorem{prop}{Proposition}
\newtheorem{lemma}{Lemma}
\newtheorem{rethm}{Theorem}
\newtheorem{reprop}{Proposition}
\newtheorem{relemma}{Lemma}
\newtheorem*{thm*}{Theorem}
\newtheorem*{prop3*}{Proposition 3}
\newtheorem*{lemma*}{Lemma}
\def \y {\mathbf{y}}
\def \E {\mathrm{E}}
\def \x {\mathbf{x}}
\def \z {\mathbf{z}}
\def \R {\mathbb{R}}
\def \S {\mathcal{S}}
\def \A {\mathcal{A}}
\def \q {\mathbf{q}}
\def \v {\mathbf{v}}
\def \p {\mathbf{p}}
\def \q {\mathbf{q}}
\def \xh {\widehat{\x}}
\def \G {\mathcal G}
\def\grad{\mathcal G}
\newcommand\mycomment[1]{
}
\def\data{\mathbf{q}}
\title{A Unified Analysis of Stochastic Momentum Methods for Deep Learning}
\author{} 
\author{
Yan Yan$^{1,2}$, 
Tianbao Yang$^3$, 
Zhe Li$^3$, 
Qihang Lin$^4$,
Yi Yang$^{1,2}$
\\ 
$^1$ SUSTech-UTS Joint Centre of CIS, Southern University of Science and Technology \\
$^2$ Centre for Artificial Intelligence, University of Technology Sydney \\
$^3$ Department of Computer Science, The University of Iowa \\
$^4$ Tippie College of Business, The University of Iowa  \\
yan.yan-3@student.uts.edu.au,
\{tianbao-yang, zhe-li-1, qihang-lin\}@uiowa.edu,
yi.yang@uts.edu.au
}
\begin{document}

\maketitle

\begin{abstract}
{\it Stochastic momentum} methods have been widely adopted in training deep neural networks. 
However, their theoretical analysis of convergence of the training objective and the generalization error for prediction is still under-explored.
This paper aims to bridge the gap between practice and theory by analyzing the stochastic gradient (SG) method, and the stochastic momentum methods including two famous variants, \textit{i.e.}, the stochastic heavy-ball (SHB) method and the stochastic variant of Nesterov's accelerated gradient (SNAG) method.
We propose a framework that unifies the {\emph{three}} variants. 
We then derive the convergence rates of the norm of gradient for the non-convex optimization problem, and analyze the generalization performance through the uniform stability approach.
Particularly, the convergence analysis of the training objective exhibits that SHB and SNAG have no advantage over SG. 
However, the stability analysis shows that the momentum term can improve the stability of the learned model and hence improve the generalization performance. 
These theoretical insights verify the common wisdom and are also corroborated by our empirical analysis on deep learning.

\end{abstract}

\section{Introduction}
Momentum methods have a long history dating back to 1960's. 
\cite{poly64} proposed a heavy-ball (HB) method that uses the previous two iterates when computing the next one. 
The original motivation of momentum methods is to speed up the convergence for convex optimization. 
For a twice continuously differential strongly convex and smooth objective function, Polyak's analysis yields an accelerated linear convergence rate over the standard gradient method. 
In 1983, 
\cite{citeulike:9501961} proposed an accelerated gradient (NAG) method, which is also deemed as a momentum method and achieves the optimal $O(1/t^2)$ convergence rate for convex smooth optimization~\footnote{$t$ is the number of iterations.}, which has a clear advantage over standard gradient method with $O(1/t)$ convergence for the same problem. 
It was later on shown to have an accelerated linear convergence rate for smooth and strongly convex optimization problems~\cite{opac-b1104789}. 
Both the HB method and the NAG method use a {\emph{momentum}} term in updating the solution, i.e., the difference between current iterate and the previous iterate. 
Therefore, both methods have been referred to as {\emph{momentum}} methods in literature.  

Due to recently increasing interests in deep learning, the stochastic variants of HB and NAG methods have been employed broadly in optimizing deep neural networks~\cite{krizhevsky2012imagenet,citeulike:12721718}. 
\cite{DBLP:conf/icml/SutskeverMDH13} are probably the first to study SNAG and compare it with SHB for optimizing deep neural networks. 
Although they have some interesting findings of these two methods in deep learning (e.g., a distinct improvement in performance of SNAG is usually observed in their experiments), their analysis and argument are mostly restricted to convex problems.
Moreover, some questions remained unanswered, e.g., 
(i) Do SHB and SNAG enjoy faster convergence rates than SG for deep learning (non-convex optimization problems) as in the convex and deterministic setting?  
(ii) If not, what is the advantage of these two methods over SG?  
(iii) Why does SNAG often yield improvement over SHB?

In this paper, we propose and analyze a unified framework for stochastic momentum methods and stochastic gradient method aiming at bringing answers and more insights to above questions.  
We summarize our results and contributions as follows:
\mycomment{
(1) We present a unified stochastic momentum method with a free scalar parameter that includes SHB, SNAG and SG as special cases by setting three different values for the free parameter. 
(2) We present a unified convergence analysis of the gradient's norm of the training objective of these stochastic methods for non-convex optimization,  revealing the same rate of convergence for minimizing the training objective function.
(3) We analyze the generalization performance of the unified framework through the uniform stability approach.
The result exhibits a clear advantage of stochastic momentum methods, i.e., adding a momentum helps generalization.
(4) Our empirical results for learning deep neural networks complete the unified view and analysis by showing that (i) there is no clear advantage of SHB and SNAG over SG in convergence speed of the training error; (ii) the advantage of SHB and SNAG lies at better generalization due to more stability; (iii)  SNAG usually achieves the best tradeoff between speed of convergence in training error and  stability of testing error among the three stochastic methods.
}

\begin{itemize}
\item We propose a unified stochastic momentum framework parameterized by a free scalar parameter.
      The framework reduces to SHB, SNAG and SG by setting three different values for the free parameter. 

\item We present a unified convergence analysis of the gradient's norm of the training objective of these stochastic methods for non-convex optimization, revealing the same rate of convergence for three variants.

\item We analyze the generalization error of the unified framework through the uniform stability approach.
The result exhibits a clear advantage of stochastic momentum methods, i.e., adding a momentum helps generalization.

\item Our empirical results for learning deep neural networks complete the unified view and analysis by showing that (i) there is no clear advantage of SHB and SNAG over SG in convergence speed of the training error; (ii) the advantage of SHB and SNAG lies at better generalization due to more stability; (iii)  SNAG usually achieves the best tradeoff between speed of convergence in training error and  stability of testing error among the three stochastic methods.
\end{itemize}

\section{More Related Work}
There is much analysis on the momentum methods for deterministic optimization. 
Nesterov pioneered the work of accelerated gradient methods for smooth convex optimization~\cite{opac-b1104789}. 
The convergence analysis of HB has been recently extended to smooth functions for both convex~\cite{arxiv1412,DBLP:journals/jmiv/OchsBP15} and non-convex deterministic optimization~\cite{DBLP:journals/siamis/OchsCBP14,DBLP:journals/corr/Ochs16}. 
As the rising popularity of deep neural networks, the stochastic variants of HB and NAG have been employed widely for training neural networks and leading to tremendous success for many problems in computer vision and speech recognition~\cite{krizhevsky2012imagenet,citeulike:12721718,DBLP:conf/icml/SutskeverMDH13}.
However, their {\emph{stochastic}} variants in non-convex optimization are under-explored.

It is worth mentioning that two recent works have established the convergence results of the SG method~\cite{DBLP:journals/siamjo/GhadimiL13a} and the stochastic version of a different variant of accelerated gradient method for non-convex optimization~\cite{DBLP:journals/mp/GhadimiL16}. 
However, the variant of accelerated gradient method in~\cite{DBLP:journals/mp/GhadimiL16} is hard to be explained in the framework of momentum methods and is not widely employed for optimizing deep neural networks. 
Moreover, their analysis is not applicable to the SHB method. 
Hence, from a theoretical standpoint, it is still interesting to analyze the stochastic variants of the Nesterov's accelerated gradient method and the HB method for stochastic non-convex optimization, which are extensively employed for learning deep neural networks. 
Our unified analysis shows that they enjoy the same order of convergence rate as the SG method, which conincides with the results in~\cite{DBLP:journals/siamjo/GhadimiL13a,DBLP:journals/mp/GhadimiL16}. 

On the other hand, there exist few studies on analyzing the statistical properties (e.g., the generalization error) of the model learned by the SG method or stochastic momentum methods for minimizing the empirical risk. 
Conventional studies on the SG method in terms of statistical property focus on one pass learning, i.e., the training examples are passed once~\cite{Cesa-BianchiCG04}. 
Recently, there emerge several works that aim to establish the statistical properties of the multiple pass SG methods in machine learning~\cite{DBLP:conf/nips/LinR16,hardticml2016train}. 
The latter work is closely related to the present work, which established the generalization error of the SG method with multiple pass for both convex and non-convex learning problems by analyzing the uniform stability. 
Nevertheless, it remains an open problem from a theoretical standpoint how the momentum term helps improve the generalization, though it has been observed to yield better performance in practice for deep learning~\cite{DBLP:conf/icml/SutskeverMDH13}.  
Our unified analysis of the uniform stability of the SG method and stochastic momentum methods explicitly exhibit the advantage of the stochastic momentum methods in terms of the generalization error, hence providing the theoretical support for the common wisdom.



In the remainder of the paper, we first review the HB and NAG method, and present their stochastic variants.
Then we present a unified view of these momentum methods.
Next, we present the convergence and generalization analysis for stochastic momentum methods.
In addition, we present empirical results for comparing different methods for optimizing deep neural networks.
Finally, we conclude this work.

\section{Momentum Methods And Their Stochastic Variants}
\subsection{Notations and Setup}
Let us consider a general setting of learning with deep learning as a special case. 
Given a set of training examples $\mathcal S=(\data_1, \ldots, \data_n)$ sampled from an unknown distribution $\mathcal D$, the goal of learning is to find a model $\x$ that minimizes the population risk, i.e.,
\begin{align}\label{eqn:population}
\min_{\x\in\Omega} F(\x) \triangleq \E_{\data \sim\mathcal D}[\ell(\x, \data)],
\end{align}
where $\ell$ is a loss function,  $\ell(\x, \data)$ denotes the loss of the model $\x$ on the example $\data$ and $\Omega$ denotes the hypothesis class of the model. Since we cannot compute $F(\x)$ due to unknown distribution $\mathcal D$, one usually learns a model by minimizing the empirical risk, i.e.,
\begin{align}\label{eqn:erm}
\min_{\x\in\Omega} f(\x)\triangleq \frac{1}{n}\sum_{i=1}^n\ell(\x, \data_i)  .
\end{align}

Two concerns usually present in the above empirical risk minimization approach.
First, how fast the optimization algorithm solves Problem~(\ref{eqn:erm}).
This is usually measured by the speed of convergence to the optimal solution.
However, it is NP-hard to find the global optimal solution for a general non-convex optimization problem~\cite{Hillar:2013:MTP:2555516.2512329}. 
As with many previous works~\cite{DBLP:journals/siamjo/GhadimiL13a,DBLP:journals/mp/GhadimiL16,DBLP:journals/corr/ReddiHSPS16,DBLP:journals/corr/ZhuH16a}, we study the convergence rate of an iterative algorithm to the critical point, i.e., a point $\x_*$ such that $\nabla f(\x_*)=0$.

\mycomment{
There are two important questions about the above empirical risk minimization approach: 
(i) how fast is an optimization algorithm for solving~(\ref{eqn:erm})? 
(ii) how good is the learned model in terms of generalization performance? 
For the first question, one usually measures the speed of optimization by the convergence to the optimal solution.
However, it is an NP-hard problem to find the global optimal solution for a general non-convex optimization problem~\cite{Hillar:2013:MTP:2555516.2512329}. 
As with many previous works~\cite{DBLP:journals/siamjo/GhadimiL13a,DBLP:journals/mp/GhadimiL16,DBLP:journals/corr/ReddiHSPS16,DBLP:journals/corr/ZhuH16a}, we study the convergence rate of an iterative algorithm to the critical point, i.e., a point $\x_*$ such that $\nabla f(\x_*)=0$.
}

Second, how the model learned by solving Problem~(\ref{eqn:erm}) generalizes to different data.
It is usually measured by the population risk $F(\xh)$ defined in~(\ref{eqn:population}). 
Since the model $\xh$ is learned from the random samples $\data_1, \ldots, \data_n$ with randomness in the optimization algorithm itself, the expected population risk $\E[F(\xh)]$ is also used for the analysis with the expectation taking over the randomness in the samples and the algorithm itself. 
One way to assess the expected population risk is the generalization error, i.e., the difference between the population risk and the empirical risk,
\begin{align}
    \epsilon_{\text{gen}} \triangleq \E[F(\xh) - f(\xh)].
\end{align}


We use $\nabla h(\x)$ to denote the gradient of a smooth function. 
A function is smooth iff there exists $L>0$ such that
\begin{align}
\|\nabla h(\y) - \nabla h(\x)\|\leq L\|\y - \x\|, \quad \forall \x, \y\in\R^d  ,
\end{align}
\noindent
where $\|\cdot\|$ denotes the Euclidean norm. 
Note that the above inequality does not imply convexity. 
Through the paper, we assume that $\ell(\x, \q)$ a $G$-Lipschitz continuous and $L$-smooth non-convex function in $\x$, and assume that $\Omega=\R^d$. 
It follows that $f(\x)$ is $G$-Lipschitz continuous and $L$-smoothness. 

\subsection{Stochastic Momentum Methods}

We denote by $\G_k = \G(\x_k;\xi_k)$ a stochastic gradient of $f(\x)$ at $\x_k$ depending on a random variable $\xi_k$ such that  $\E[\G(\x_k ;\xi_k)]=\nabla f(\x_k)$. 
In the context of the empirical risk minimization~(\ref{eqn:erm}), $\G(\x_k; \xi_k) = \nabla\ell(\x_k; \data_{i_k})$, where $i_k$ is a random index sampled from $\{1,\ldots, n\}$.

There are two variants of momentum methods for solving~(\ref{eqn:erm}), i.e., HB and NAG.
HB was originally proposed for optimizing a smooth and strongly convex objective function.
Based on HB, the update of stochastic HB (SHB) is given below for $k=0, \ldots, $
\begin{align}\label{eqn:shb1}
  \text{SHB:} \quad \x_{k+1} = \x_k - \alpha \G(\x_k; \xi_k) + \beta(\x_k - \x_{k-1})   ,
\end{align}
with $\x_{-1} = \x_0$, where $\beta\in[0,1)$ is the momentum constant and $\alpha$ is the step size. 
Equivalently, the above update can be implemented by the following two steps for $k=0, \ldots, $:
\begin{equation}\label{eqn:shb2}
  \text{SHB:}\quad
  \left\{ 
    \begin{aligned}
      \v_{k+1} & = \beta \v_{k} - \alpha \G(\x_k; \xi_{k})\\
      \x_{k+1} & = \x_k + \v_{k+1} .
    \end{aligned}
  \right.
\end{equation}

Based on NAG~\cite{opac-b1104789}, the update of stochastic NAG (SNAG) consists of the two steps below for $k=0, \ldots,$:
\begin{equation}\label{eqn:sag1}
\text{SNAG:}\quad 
  \left\{
    \begin{aligned}
      \y_{k+1} & = \x_k - \alpha \G(\x_k; \xi_{k})\\
      \x_{k+1} & = \y_{k+1} + \beta(\y_{k+1} - \y_k)  ,
    \end{aligned}
  \right.
\end{equation}
with $\y_0 = \x_0$.
By introducing $\v_k = \y_k - \y_{k-1}$ with $\v_0 =0$, the above update can be equivalently written as
\begin{equation}\label{eqn:sag2}
\text{SNAG:}\quad 
  \left\{
    \begin{aligned}
      \v_{k+1}& = \beta\v_k - \alpha \G(\y_k + \beta \v_k; \xi_k)\\
      \y_{k+1}& =\y_k + \v_{k+1}  .
    \end{aligned}
  \right.
\end{equation}
Finally, the traditional view  of SG can be written as
\begin{equation}\label{eqn:SG}
  \text{SG:}\quad\quad \x_{k+1} = \x_k - \alpha\G(\x_k; \xi_k)  .
\end{equation}
By comparing~(\ref{eqn:sag2}) to~(\ref{eqn:shb2}), one might argue that the difference between HB and NAG lies at the point for evaluating the gradient~\cite{DBLP:conf/icml/SutskeverMDH13}. 
We will present a different unified view of the three methods that allows us to analyze them in a unified framework.
The convergence of HB and NAG has been established for convex optimization~\cite{poly64,citeulike:9501961,opac-b1104789,arxiv1412,DBLP:journals/jmiv/OchsBP15}.

\mycomment{
The requirement of HB and NAG makes them prohibitive in big data.
Therefore, when employed in the optimization of deep neural networks, the (sub)gradient is usually replaced with the stochastic (sub)gradient, which yields the stochastic variants.
We denote by $\G_k=\G(\x_k;\xi_k)$ a stochastic gradient of $f(\x)$ at $\x_k$ depending on a random variable $\xi_k$ such that  $\E[\G(\x_k ;\xi_k)]=\nabla f(\x_k)$. 
In the context of the empirical risk minimization~(\ref{eqn:erm}), $\G(\x_k; \xi_k) = \nabla\ell(\x_k; \data_{i_k})$, where $i_k$ is a random index sampled from $\{1,\ldots, n\}$.
Then the update of stochastic HB (SHB) becomes
\begin{align}\label{eqn:shb}
\text{SHB:} \quad \x_{k+1} = \x_k - \alpha \G(\x_k; \xi_k) + \beta(\x_k - \x_{k-1})   .
\end{align}
The update of stochastic NAG (SNAG) becomes
\begin{equation}\label{eqn:sag}
\text{SNAG:}\quad \left\{\begin{aligned}
 \y_{k+1}& = \x_k - \alpha \G(\x_k; \xi_k)\\
\x_{k+1}&  =\y_{k+1} + \beta(\y_{k+1} - \y_k)   .
\end{aligned}\right.
\end{equation}
Finally, the traditional view  of SG can be written as
\begin{equation}\label{eqn:SG}
  \text{SG:}\quad\quad\x_{k+1} = \x_k - \alpha\G(\x_k; \xi_k)  .
\end{equation}
}

\section{A Unified View of Stochastic Momentum Methods}
In this section, we present a unified view of the two (stochastic) momentum methods and (stochastic) gradient methods.
We first present the unified framework and then show that HB, NAG and the gradient method are special cases of the unified framework. 
Denote by $\G(\x_k)$ either a gradient or a stochastic gradient of $f(\x)$ at $\x_k$.

Let $\alpha>0$, $\beta\in[0,1)$, and  $s\geq 0$.
The updates of the stochastic unified momentum (SUM) method  are given by
\begin{equation}
\label{eqn:um}
  \text{SUM}:\quad
    \left\{
      \begin{aligned}
        \y_{k+1} &  = \x_k - \alpha \G(\x_k)\\
        \y^s_{k+1} & = \x_k - s\alpha \G(\x_k)\\
        \x_{k+1} & = \y_{k+1} + \beta(\y^s_{k+1} - \y^s_k)  ,
      \end{aligned}
    \right.
\end{equation}
for $k\geq 0$ with $\y^s_0 = \x_0$. 
It is notable that in the update of $\x_{k+1}$, a momentum term is constructed based on the auxiliary sequence $\{\y_k^s\}$, whose update is parameterized by $s$.
The following proposition indicates that SUM reduces to the concerned three special cases by setting different values to $s$.

\begin{prop}
\label{proposition:special_cases}
SUM~(\ref{eqn:um}) reduces to the three variants SG~(\ref{eqn:SG}), SHB~(\ref{eqn:shb1}) and SNAG~(\ref{eqn:sag1}) by setting $s = \frac{1}{1 - \beta}$, $s = 0$ and $s = 1$, respectively.
Particularly, the update of SG is
$
\x_{k+1} = \x_{k} - \frac{\alpha}{1 - \beta} \G(\x_{k}, \xi_{k}) ,
$
where the step size is $\frac{\alpha}{1 - \beta}$.
\end{prop}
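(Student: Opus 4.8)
The plan is to treat the three values of $s$ separately, noting that two of them follow by direct substitution and only the SG case requires an actual argument. In each case the goal is to show that the $\x$-sequence produced by SUM~(\ref{eqn:um}) coincides with the one generated by the target update under the matching initialization. I would first record that for $k\geq 1$ the auxiliary sequence satisfies $\y^s_k = \x_{k-1} - s\alpha\G(\x_{k-1})$, while $\y^s_0 = \x_0$ is imposed by convention; the mismatch between this convention and the general formula is what I will need to watch in the base case below.

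For $s=1$ the second line of SUM collapses onto the first, since $\y^s_{k+1} = \x_k - \alpha\G(\x_k) = \y_{k+1}$, hence $\y^s_k = \y_k$ for all $k$ (including $\y^s_0 = \x_0 = \y_0$). Substituting into the third line gives $\x_{k+1} = \y_{k+1} + \beta(\y_{k+1}-\y_k)$, which is exactly SNAG~(\ref{eqn:sag1}) with its initialization $\y_0 = \x_0$. For $s=0$ the second line becomes $\y^s_{k+1} = \x_k$, so $\y^s_k = \x_{k-1}$, and the third line reads $\x_{k+1} = \big(\x_k - \alpha\G(\x_k)\big) + \beta(\x_k - \x_{k-1})$, i.e. SHB~(\ref{eqn:shb1}); here the convention $\y^s_0 = \x_0$ matches SHB's $\x_{-1} = \x_0$. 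Both reductions are thus purely a matter of substitution.

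The only case needing work is $s = \frac{1}{1-\beta}$. Writing $\gamma \triangleq \frac{\alpha}{1-\beta} = s\alpha$, substitution of the first two lines into the third and subtracting $\x_k$ yields
\[
\x_{k+1}-\x_k = -\alpha\G(\x_k) + \beta(\x_k - \x_{k-1}) - \beta\gamma\big(\G(\x_k) - \G(\x_{k-1})\big).
\]
I would prove by induction on $k$ that this equals $-\gamma\G(\x_k)$, i.e. that $\{\x_k\}$ obeys the plain SG recurrence with step $\gamma$. Under the inductive hypothesis $\x_k - \x_{k-1} = -\gamma\G(\x_{k-1})$, the two terms carrying $\G(\x_{k-1})$ cancel and the coefficient of $\G(\x_k)$ collapses via the identity $\alpha + \beta\gamma = \alpha + \frac{\beta\alpha}{1-\beta} = \frac{\alpha}{1-\beta} = \gamma$. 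The base case $k=0$ relies on the anomalous initialization $\y^s_0 = \x_0$: it gives $\x_1 = \x_0 - (\alpha + \beta\gamma)\G(\x_0) = \x_0 - \gamma\G(\x_0)$, anchoring the induction and confirming the step size $\frac{\alpha}{1-\beta}$ claimed in the statement.

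The main obstacle, modest as it is, lies entirely in this last case: because the momentum term $\beta(\y^s_{k+1}-\y^s_k)$ does not vanish, the reduction cannot be read off by inspection but must invoke the previous step's recurrence so that the stale gradient $\G(\x_{k-1})$ cancels and the two step sizes merge into $\frac{\alpha}{1-\beta}$. Verifying that the nonstandard initial value $\y^s_0 = \x_0$ (rather than $\x_{-1} - s\alpha\G(\x_{-1})$) is exactly what makes the base case consistent is the one point where a little care is required.
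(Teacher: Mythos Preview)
Your proposal is correct and follows essentially the same approach as the paper. The two substitution cases $s=0$ and $s=1$ are handled identically, and for $s=\frac{1}{1-\beta}$ the paper's telescoping argument on the auxiliary quantity $\x_{k+1}-\x_k+s\alpha\G(\x_k)$ (which satisfies a homogeneous $\beta$-recursion because $s-1-\beta s=0$) is just a repackaging of your induction: your identity $\alpha+\beta\gamma=\gamma$ is exactly their $s-1-\beta s=0$, and your base case is their verification that $\x_1-\x_0+\frac{\alpha}{1-\beta}\G(\x_0)=0$.
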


From the above result, we can see that SHB, SNAG and SG are three variants of SUM.
Moreover, the SUM view of SG implies that  SG can have a larger ``effective'' step size (i.e., $\alpha/(1-\beta)$) before the gradient $\G(\x_k)$ than that of SHB and SNAG.
We note that this is a very important observation about SG since setting a smaller effective step size for SG (e.g., the same as that in SNAG) will yield much worse convergence of training error as observed in experiments.

To facilitate the unified analysis of the stochastic momentum methods, we note that~(\ref{eqn:um}) implies the recursions in~(\ref{eqn:rec}) and~(\ref{eqn:rec2}) given in the following lemma.
\begin{lemma}\label{lem:k} 
Let $\p_k$ be
\begin{equation}\label{eqn:p}
\p_k = \left\{
\begin{aligned}
&\frac{\beta}{1-\beta}(\x_k - \x_{k-1} + s\alpha \G(\x_{k-1})),\: k\geq 1\\
& 0,\quad k=0\\
\end{aligned}\right.
\end{equation}
and
\begin{equation}\label{eqn:v}
\v_k = \frac{(1-\beta)}{\beta}\p_k   .
\end{equation}
Then for any $k\geq 0$, we have
\begin{align}
\label{eqn:rec}
\x_{k+1} + \p_{k+1} & = \x_k + \p_k  - \frac{\alpha}{ 1- \beta}\G(\x_k)   ,   \\
\v_{k+1} & = \beta \v_k + ((1-\beta)s - 1) \alpha \G(\x_k)  \label{eqn:rec2}   .
\end{align}
\end{lemma}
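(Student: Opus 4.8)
The plan is to derive both recursions directly from the SUM update~(\ref{eqn:um}) by substituting the definitions~(\ref{eqn:p}) and~(\ref{eqn:v}), treating the base case $k=0$ separately because $\p_0=0$ is imposed rather than produced by the general formula. First I would expand the third line of~(\ref{eqn:um}). Using $\y_{k+1}=\x_k-\alpha\G(\x_k)$ and, for $k\geq 1$, the index-shifted identities $\y^s_{k+1}=\x_k-s\alpha\G(\x_k)$ and $\y^s_k=\x_{k-1}-s\alpha\G(\x_{k-1})$, the update becomes
\begin{align*}
\x_{k+1}=\x_k-\alpha\G(\x_k)+\beta(\x_k-\x_{k-1})-\beta s\alpha\G(\x_k)+\beta s\alpha\G(\x_{k-1}).
\end{align*}
The key observation is that the grouping $\beta(\x_k-\x_{k-1})+\beta s\alpha\G(\x_{k-1})$ is exactly $(1-\beta)\p_k$ by~(\ref{eqn:p}), so that $\x_{k+1}=\x_k-\alpha\G(\x_k)+(1-\beta)\p_k-\beta s\alpha\G(\x_k)$.

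Next I would substitute this closed form into $\p_{k+1}=\frac{\beta}{1-\beta}(\x_{k+1}-\x_k+s\alpha\G(\x_k))$ to verify~(\ref{eqn:rec}). Writing
\begin{align*}
\x_{k+1}+\p_{k+1}=\tfrac{1}{1-\beta}\x_{k+1}-\tfrac{\beta}{1-\beta}\x_k+\tfrac{\beta s\alpha}{1-\beta}\G(\x_k)
\end{align*}
and inserting the expression for $\x_{k+1}$, the $\x_k$-coefficients collapse to $\x_k$, the $s$-dependent gradient terms cancel, and the surviving terms are precisely $\p_k-\frac{\alpha}{1-\beta}\G(\x_k)$, which is~(\ref{eqn:rec}). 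For~(\ref{eqn:rec2}) I would use that~(\ref{eqn:v}) and~(\ref{eqn:p}) give $\v_{k+1}=\frac{1-\beta}{\beta}\p_{k+1}=\x_{k+1}-\x_k+s\alpha\G(\x_k)$ and $(1-\beta)\p_k=\beta\v_k$; substituting the closed form of $\x_{k+1}-\x_k$ then turns $\v_{k+1}$ into $\beta\v_k+\alpha\G(\x_k)\,((1-\beta)s-1)$, which is the claim.

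The main obstacle is the boundary term at $k=0$: since $\p_0$ is defined to be $0$ (the auxiliary sequence starts at $\y^s_0=\x_0$ with no preceding gradient), the identity $\y^s_k=\x_{k-1}-s\alpha\G(\x_{k-1})$ fails at $k=0$, and the general substitution above is not available, so I must check both recursions directly. I would compute $\x_1=\x_0-\alpha\G(\x_0)-\beta s\alpha\G(\x_0)$ from~(\ref{eqn:um}) with $\y^s_0=\x_0$, evaluate $\p_1=\frac{\beta}{1-\beta}\alpha\G(\x_0)\,((1-\beta)s-1)$ from~(\ref{eqn:p}), and confirm $\x_1+\p_1=\x_0-\frac{\alpha}{1-\beta}\G(\x_0)$ together with $\v_1=((1-\beta)s-1)\alpha\G(\x_0)$, matching~(\ref{eqn:rec}) and~(\ref{eqn:rec2}) since $\p_0=\v_0=0$. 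Once this base case is confirmed, the remaining content is the routine algebra sketched above, carried out for all $k\geq 1$.
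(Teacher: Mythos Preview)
Your proposal is correct and follows essentially the same route as the paper: both verify~(\ref{eqn:rec}) and~(\ref{eqn:rec2}) by expanding the SUM update~(\ref{eqn:um}) and substituting the definitions of $\p_k$ and $\v_k$. The only organizational difference is that the paper absorbs the base case by adopting the convention $\x_{-1}=\x_0$ and $\G(\x_{-1})=0$ so that the general formula applies for all $k\geq 0$, whereas you check $k=0$ explicitly; your intermediate identity $\x_{k+1}=\x_k-\alpha\G(\x_k)+(1-\beta)\p_k-\beta s\alpha\G(\x_k)$ is a slight streamlining of the paper's longer two-sided comparison, but the substance is identical.
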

{\bf Remark:} 
We note that a similar recursion in~(\ref{eqn:rec}) with $s=0$ and $s=1$ has been observed and employed to~\cite{arxiv1412} for deterministic convex optimization. 
However, the recursion in~(\ref{eqn:rec2}) for $\v_k$ (i.e., $\p_k$)  is a key to our convergence analysis for non-convex optimization and importantly the generalization to any $s$ allows us to analyze SHB, SNAG and SG in a unified framework.

Finally, we present a lemma stating the cumulative effect of updates for each iterate, which will be useful for our generalization error analysis.
\begin{lemma}\label{lem:cm}
Given the update in~(\ref{eqn:um}), for any $k\geq 0$ we have
\begin{align}
    \x_{k+1}  = \x_{0} - \sum_{\tau=0}^{k} \left\{ \frac{1}{1-\beta} - \beta^{k-\tau+1}\frac{1-s(1-\beta)}{1-\beta} \right\}\alpha \grad(\x_{\tau})  .
\end{align}
\end{lemma}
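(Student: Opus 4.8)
The plan is to combine the two recursions furnished by Lemma~\ref{lem:k} and unroll them into closed form. First I would sum the telescoping identity~(\ref{eqn:rec}) over $\tau = 0, \ldots, k$. The left-hand side collapses to $\x_{k+1} + \p_{k+1}$ and the right-hand side to $\x_0 + \p_0 - \frac{\alpha}{1-\beta}\sum_{\tau=0}^k \G(\x_\tau)$. Since $\p_0 = 0$ by the definition~(\ref{eqn:p}), this produces the intermediate identity
\begin{align*}
\x_{k+1} = \x_0 - \frac{\alpha}{1-\beta}\sum_{\tau=0}^k \G(\x_\tau) - \p_{k+1} ,
\end{align*}
so it remains only to express $\p_{k+1}$ explicitly in terms of the stochastic gradients.

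Next I would obtain the closed form of $\p_{k+1}$ from the linear recursion~(\ref{eqn:rec2}). Using the relation~(\ref{eqn:v}), namely $\v_k = \frac{1-\beta}{\beta}\p_k$, the condition $\p_0 = 0$ forces $\v_0 = 0$, so~(\ref{eqn:rec2}) is a first-order linear recurrence $\v_{k+1} = \beta\v_k + ((1-\beta)s - 1)\alpha\G(\x_k)$ with zero initial value. Unrolling it by a one-line induction yields the geometric-weighted sum
\begin{align*}
\v_{k+1} = ((1-\beta)s - 1)\,\alpha\sum_{\tau=0}^k \beta^{k-\tau}\G(\x_\tau) ,
\end{align*}
whence $\p_{k+1} = \frac{\beta}{1-\beta}\v_{k+1} = \frac{\beta((1-\beta)s-1)}{1-\beta}\,\alpha\sum_{\tau=0}^k \beta^{k-\tau}\G(\x_\tau)$.

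Finally I would substitute this expression for $\p_{k+1}$ into the intermediate identity and collect the coefficient multiplying each $\alpha\G(\x_\tau)$. Writing $\beta\cdot\beta^{k-\tau} = \beta^{k-\tau+1}$ and rewriting $(1-\beta)s - 1 = -(1 - s(1-\beta))$, the two contributions fuse into the single bracketed coefficient $\frac{1}{1-\beta} - \beta^{k-\tau+1}\frac{1-s(1-\beta)}{1-\beta}$ asserted in the statement, completing the proof.

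I do not expect a genuine obstacle here: the argument is mechanical once the two recursions of Lemma~\ref{lem:k} are in hand. The only points demanding care are verifying that the initial conditions $\p_0 = 0$ and $\v_0 = 0$ are consistent with the definitions~(\ref{eqn:p})--(\ref{eqn:v}), and performing the final sign and factor bookkeeping so that the factor $(1-\beta)s - 1$ becomes $-(1 - s(1-\beta))$ carrying the correct power $\beta^{k-\tau+1}$.
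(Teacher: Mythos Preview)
Your proposal is correct. Both your argument and the paper's hinge on unrolling the same first-order linear recurrence (the one for $\v_{k+1}$ in~(\ref{eqn:rec2})), but the bookkeeping differs. The paper works directly from the SUM update~(\ref{eqn:um}), re-derives the recurrence for $\x_{k+1} - (\x_k - s\alpha\G(\x_k))$ from scratch, unrolls it, and then sums the identity $\x_{k+1} = \x_k - s\alpha\G(\x_k) + \v_{k+1}$ over $k$; because $\v_{k+1}$ is itself a sum, this produces a double sum $\sum_{\tau=0}^k\sum_{i=0}^\tau(\cdot)$ that must be swapped and collapsed via a geometric series. You instead leverage both recursions of Lemma~\ref{lem:k}: telescoping~(\ref{eqn:rec}) immediately delivers the $\frac{1}{1-\beta}$ coefficient as a single sum, leaving only the single correction term $\p_{k+1}$ to express via the unrolled~(\ref{eqn:rec2}). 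Your route is shorter and avoids the double-sum swap entirely, at the cost of depending on Lemma~\ref{lem:k}; the paper's route is self-contained from~(\ref{eqn:um}) but redoes work already packaged in that lemma.
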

{\bf Remark:} The above cumulative update reduce to the following three cases for SHB ($s=0$), SNAG ($s=1$) and SG ($s=1/(1-\beta)$).
\begin{align*}
    \x_{k+1}=\left\{\begin{array}{ll}\x_0 -\sum_{t=0}^{k} \left\{ \frac{1}{1-\beta} - \frac{\beta^{k-\tau+1}}{1-\beta}\right\}\alpha \grad(\x_{\tau})&s=0\\
    \x_0 -\sum_{t=0}^{k} \left\{ \frac{1}{1-\beta} - \frac{\beta^{k-\tau+2}}{1-\beta}\right\}\alpha \grad(\x_{\tau})&s=1\\
    \x_0 -\sum_{t=0}^{k} \frac{1}{1-\beta} \alpha \grad(\x_{\tau})&s=\frac{1}{1-\beta}\end{array}\right.
\end{align*}
From the above cumulative update, we can see that SHB and SNAG have smaller step size for each stochastic gradient. This is the main reason that SHB and SNAG are more stable than SG, and hence yield a solution with better generalization performance. We will present a more formal analysis of generalization error later.


\section{Convergence Analysis of SUM}
\label{sec:conv}
In this section, we present the convergence results for the empirical risk minimization~(\ref{eqn:erm}) of the SUM methods.
As mentioned before, for deep learning problems the loss function $\ell(\x, \data)$ is a non-convex function, which makes finding the global optimal solution an NP-hard problem. Instead, as in many previous works we will present the convergence rates of SUM in terms of the norm of the gradient. We will present the main results first and then sketch the analysis. Detailed proofs are deferred to the supplement due to limit of space.  

\subsection{Main results}


\begin{thm}~\label{thm:3}(Convergence of SUM)
Suppose $f(\x)$ is a non-convex and $L$-smooth function, $\E[\|\G(\x; \xi) - \nabla f(\x)\|^2]\leq \sigma^2$ and $\|\nabla f(\x)\|\leq G$ for any $\x$.
 Let update~(\ref{eqn:um}) run for $t$ iterations with $\G(\x_k; \xi_k)$. By setting $\alpha =\min\{\frac{1-\beta}{2L}, \frac{C}{\sqrt{t+1}}\}$ we have
\begin{align*}
&\min_{k=0,\ldots, t}\E[\|\nabla f(\x_k)\|^2]\\
&\leq \frac{2(f(\x_0) - f_*)(1-\beta)}{t+1}\max\left\{\frac{2L}{1-\beta}, \frac{\sqrt{t+1}}{C}\right\} \\
&+ \frac{C}{\sqrt{t+1}}\frac{L\beta^2((1-\beta)s -1)^2(G^2+\sigma^2) + L\sigma^2(1-\beta)^2}{(1-\beta)^3} \end{align*}
\end{thm}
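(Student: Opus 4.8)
The plan is to reduce SUM to a plain SGD recursion on a shifted iterate and then run the standard non-convex descent argument, paying careful attention to the mismatch between the point where smoothness is applied and the point where the stochastic gradient is unbiased. First I would introduce $\z_k = \x_k + \p_k$. By Lemma~\ref{lem:k}, equation~(\ref{eqn:rec}), this sequence obeys $\z_{k+1} = \z_k - \frac{\alpha}{1-\beta}\G(\x_k)$, i.e.\ it is exactly SGD with effective step $\frac{\alpha}{1-\beta}$ and unbiased gradient $\E[\G(\x_k)\mid\F_k] = \nabla f(\x_k)$. Since $\p_0 = 0$ we have $\z_0 = \x_0$, and $\|\z_k - \x_k\| = \|\p_k\|$ measures the ``lag'' introduced by momentum. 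I would then apply the $L$-smoothness descent lemma to $f$ along the $\z$-sequence, giving $f(\z_{k+1}) \le f(\z_k) - \frac{\alpha}{1-\beta}\langle \nabla f(\z_k), \G(\x_k)\rangle + \frac{L\alpha^2}{2(1-\beta)^2}\|\G(\x_k)\|^2$, and take the conditional expectation $\E[\cdot\mid\F_k]$.

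The crux is that smoothness produces $\nabla f(\z_k)$ whereas the unbiased gradient sits at $\x_k$. I would write $\nabla f(\z_k) = \nabla f(\x_k) + \Delta_k$ with $\Delta_k = \nabla f(\z_k)-\nabla f(\x_k)$, so the descent term becomes $-\frac{\alpha}{1-\beta}\|\nabla f(\x_k)\|^2$ plus a cross term $-\frac{\alpha}{1-\beta}\langle\Delta_k,\nabla f(\x_k)\rangle$; note $\p_k$, hence $\Delta_k$, is $\F_k$-measurable, so replacing $\G(\x_k)$ by $\nabla f(\x_k)$ in the cross term is exact in conditional expectation. Bounding $\|\Delta_k\|\le L\|\p_k\|$ by smoothness and applying Young's inequality with a suitably chosen weight converts the cross term into a genuine $-\frac{\alpha}{2(1-\beta)}\|\nabla f(\x_k)\|^2$ descent plus an error of order $\frac{\alpha L^2}{1-\beta}\|\p_k\|^2$. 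For the second-moment term I would split $\E[\|\G(\x_k)\|^2\mid\F_k] = \|\nabla f(\x_k)\|^2 + \E[\|\G(\x_k)-\nabla f(\x_k)\|^2\mid\F_k] \le \|\nabla f(\x_k)\|^2 + \sigma^2$ and use $\alpha\le\frac{1-\beta}{2L}$ to absorb the $\|\nabla f(\x_k)\|^2$ piece while keeping the net coefficient on $\|\nabla f(\x_k)\|^2$ at least $\frac{\alpha}{2(1-\beta)}$, which is what ultimately produces the factor $2$ and the $\max\{\cdot\}$.

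Next I would control the lag. Unrolling the $\v_k$ recursion~(\ref{eqn:rec2}) from $\v_0 = 0$ gives $\v_k = ((1-\beta)s-1)\alpha\sum_{j=0}^{k-1}\beta^{k-1-j}\G(\x_j)$; combined with $\p_k = \frac{\beta}{1-\beta}\v_k$, a weighted Cauchy--Schwarz bound on the geometric sum (the weights sum to at most $\frac{1}{1-\beta}$) and $\E\|\G(\x_j)\|^2\le G^2+\sigma^2$ yield $\E\|\p_k\|^2 \le \frac{\beta^2((1-\beta)s-1)^2\alpha^2(G^2+\sigma^2)}{(1-\beta)^4}$, the term that carries both the $s$-dependence and the $(G^2+\sigma^2)$ factor. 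Finally I would telescope the per-step inequality over $k=0,\dots,t$, use $\z_0=\x_0$ and $f(\z_{t+1})\ge f_*$, lower-bound $\min_k\E\|\nabla f(\x_k)\|^2$ by the running average, and substitute $\alpha=\min\{\frac{1-\beta}{2L},\frac{C}{\sqrt{t+1}}\}$. Using $1/\alpha = \max\{\frac{2L}{1-\beta},\frac{\sqrt{t+1}}{C}\}$ reproduces the first term, while bounding the leftover $\alpha$ and $\alpha^2$ factors by $\frac{C}{\sqrt{t+1}}$ and $\frac{1-\beta}{2L}$ assembles the $\|\p_k\|^2$ and $\sigma^2$ contributions into the single second term.

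I expect the main obstacle to be exactly this $\z_k$-versus-$\x_k$ mismatch: choosing the Young weight so that, after also absorbing the $\|\nabla f(\x_k)\|^2$ piece coming from the stochastic-gradient second moment, the surviving descent coefficient is still $\frac{\alpha}{2(1-\beta)}$, while the error term retains the clean $(G^2+\sigma^2)$ dependence rather than degenerating into a first-power $\|\p_k\|$ (which would give a spurious $G\sqrt{G^2+\sigma^2}$ factor). Everything else is geometric-series summation and step-size bookkeeping.
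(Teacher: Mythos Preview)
Your proposal is essentially the paper's proof: the paper also introduces $\z_k=\x_k+\p_k$, applies the descent lemma along $\{\z_k\}$ (its Lemma~\ref{lem:1}), bounds $\E\|\p_k\|^2$ via the $\v_k$ recursion exactly as you describe (its Lemma~\ref{lem:2}), and then telescopes and substitutes the step size. The only detail worth flagging is the specific Young weight: the paper bounds the cross term by $\frac{1}{2L}\|\nabla f(\z_k)-\nabla f(\x_k)\|^2+\frac{L\alpha^2}{2(1-\beta)^2}\|\nabla f(\x_k)\|^2$, so that the two positive $\|\nabla f(\x_k)\|^2$ contributions (from the cross term and from the stochastic second moment) are equal and the net coefficient becomes $\frac{\alpha}{1-\beta}-\frac{L\alpha^2}{(1-\beta)^2}\ge\frac{\alpha}{2(1-\beta)}$, which is what delivers the factor~$2$ in the first term exactly; your described split (half the descent eaten by Young, then the second-moment piece on top) would yield $\frac{\alpha}{4(1-\beta)}$ instead.
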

{\bf Remark:}
We would like to make several remarks. 
(i) The assumption on the magnitude  of the gradient  and the variance of stochastic gradient can be simply replaced by the magnitude of the stochastic gradient, which are standard assumptions in the previous analysis of stochastic gradient method~\cite{DBLP:journals/siamjo/GhadimiL13a}. 
(ii) This is the first time that the convergence rate of SHB for non-convex optimization is established. A similar convergence rate of SG and a different stochastic variant of accelerated gradient method has been established in~\cite{DBLP:journals/siamjo/GhadimiL13a} and~\cite{DBLP:journals/mp/GhadimiL16}, respectively under similar assumptions. 
(iii) The unified convergence makes it clear that the difference of the convergence bounds for different variants of SUM lies at the term $((1-\beta)s-1)^2$, which is equal to $\beta^2$, $\beta^4$ and $0$ for SHB, SNAG and SG, respectively. 
(iv) The step size $\alpha$ of different variants of SUM used in the analysis of Theorem~\ref{thm:3} is the same value.

The above result shows that the convergence upper bound of the three methods are of the same order, i.e., $O(1/\sqrt{t})$ for the gradient's square norm. In addition, when the momentum term $\beta$ is large, the effect of  different values of $s$ in the term would $L\beta^2((1-\beta)s -1)^2(G^2+\sigma^2)$  becomes marginal in contrast to the term $L\sigma^2(1-\beta)^2$ in the convergence bound. This reveals that SHB and SNAG have no advantage over SG in terms of empirical risk minimization.

Below, we present a result with different step sizes $\alpha$ for different variants of SUM in the analysis, which sheds more insights of different methods.
\begin{thm}~\label{thm:4}(Convergence of SUM)
Suppose $f(\x)$ is a non-convex and $L$-smooth function, $\E[\|\G(\x; \xi) - \nabla f(\x)\|^2]\leq \sigma^2$ and $\|\nabla f(\x)\|\leq G$ for any $\x$.
Let update~(\ref{eqn:um}) run for $t$ iterations with $\G(\x_k; \xi_k)$. By setting $\alpha =\min\{\frac{1-\beta}{2L[1+((1-\beta)s-1)^2]}, \frac{C}{\sqrt{t+1}}\}$ we have
\begin{align*}
&\min_{k=0,\ldots, t}\E[\|\nabla f(\x_k)\|^2]\leq \frac{2(f(\x_0) - f_*)(1-\beta)}{t+1}\Lambda\\
&+ \frac{C}{\sqrt{t+1}}\frac{L\beta^2(G^2+\sigma^2) + L\sigma^2(1-\beta)^2}{(1-\beta)^3} \end{align*}
where  $
\Lambda = \max\left\{\frac{2L[1+((1-\beta)s-1)^2]}{1-\beta}, \frac{\sqrt{t+1}}{C}\right\}$.
\end{thm}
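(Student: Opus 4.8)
The plan is to run a potential-function descent argument on the auxiliary sequence $\z_k \triangleq \x_k + \p_k$ from Lemma~\ref{lem:k}, for which the recursion~(\ref{eqn:rec}) gives the clean gradient-step form $\z_{k+1} = \z_k - \frac{\alpha}{1-\beta}\G(\x_k)$. Applying the $L$-smoothness descent inequality to $f$ along this sequence yields
\[
f(\z_{k+1}) \le f(\z_k) - \frac{\alpha}{1-\beta}\langle \nabla f(\z_k), \G(\x_k)\rangle + \frac{L\alpha^2}{2(1-\beta)^2}\|\G(\x_k)\|^2 .
\]
First I would split the inner product as $\langle \nabla f(\z_k),\G(\x_k)\rangle = \langle \nabla f(\x_k),\G(\x_k)\rangle + \langle \nabla f(\z_k)-\nabla f(\x_k),\G(\x_k)\rangle$ and take expectation conditioned on the history, under which $\z_k,\p_k,\nabla f(\z_k)$ are measurable and $\E[\G(\x_k)]=\nabla f(\x_k)$. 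The first piece contributes $-\frac{\alpha}{1-\beta}\|\nabla f(\x_k)\|^2$, and the second-moment bound $\E[\|\G(\x_k)\|^2]\le \|\nabla f(\x_k)\|^2+\sigma^2\le G^2+\sigma^2$ turns the last term into the variance $\frac{L\alpha^2\sigma^2}{2(1-\beta)^2}$ plus a $\|\nabla f(\x_k)\|^2$ remainder.

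The crux is the mismatch term, which arises because the gradient is evaluated at $\x_k$ rather than at $\z_k$. After conditioning it equals $\langle \nabla f(\z_k)-\nabla f(\x_k),\nabla f(\x_k)\rangle$, which I would bound by $L\|\p_k\|\,\|\nabla f(\x_k)\|$ via $\|\nabla f(\z_k)-\nabla f(\x_k)\|\le L\|\p_k\|$ and then by Young's inequality $L\|\p_k\|\,\|\nabla f(\x_k)\| \le \frac{L\theta}{2}\|\nabla f(\x_k)\|^2 + \frac{L}{2\theta}\|\p_k\|^2$. The decisive choice, which is exactly what separates this theorem from Theorem~\ref{thm:3}, is to take the weight $\theta = \frac{((1-\beta)s-1)^2\alpha}{1-\beta}$ proportional to $((1-\beta)s-1)^2$ (when $s=\frac{1}{1-\beta}$ one has $\p_k\equiv 0$ and the mismatch term vanishes, so that case is trivial). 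To evaluate the $\|\p_k\|^2$ term I would unroll the recursion~(\ref{eqn:rec2}) into the geometric sum $\v_k = ((1-\beta)s-1)\alpha\sum_{j=0}^{k-1}\beta^{k-1-j}\G(\x_j)$, apply Jensen's inequality to $\|\cdot\|^2$ against the weights $\beta^{k-1-j}$ (whose sum is at most $\frac{1}{1-\beta}$), and use $\E[\|\G(\x_j)\|^2]\le G^2+\sigma^2$ together with $\p_k=\frac{\beta}{1-\beta}\v_k$, obtaining $\E[\|\p_k\|^2]\le \frac{\beta^2((1-\beta)s-1)^2\alpha^2(G^2+\sigma^2)}{(1-\beta)^4}$. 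With the above $\theta$ the factor $((1-\beta)s-1)^2$ cancels in the $\|\p_k\|^2$ contribution, leaving an error of order $\frac{L\beta^2\alpha^2(G^2+\sigma^2)}{(1-\beta)^4}$ free of $((1-\beta)s-1)^2$, while that same factor is transferred onto the coefficient of $\|\nabla f(\x_k)\|^2$.

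Next I would collect the coefficient of $\|\nabla f(\x_k)\|^2$, which after these steps is $-\frac{\alpha}{1-\beta} + \frac{L\alpha^2(1+((1-\beta)s-1)^2)}{2(1-\beta)^2}$; the stated step size $\alpha\le\frac{1-\beta}{2L(1+((1-\beta)s-1)^2)}$ is precisely what forces this to be at most $-\frac{\alpha}{2(1-\beta)}$. Summing the per-iteration inequality over $k=0,\dots,t$, telescoping the $f(\z_k)$ terms, using $\z_0=\x_0$ (since $\p_0=0$) and $f(\z_{t+1})\ge f_*$, I would divide through by $\frac{\alpha(t+1)}{2(1-\beta)}$ and lower bound the average by $\min_k\E[\|\nabla f(\x_k)\|^2]$. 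This yields a bound of the form
\[
\min_k\E[\|\nabla f(\x_k)\|^2] \le \frac{2(1-\beta)(f(\x_0)-f_*)}{\alpha(t+1)} + \alpha\,\frac{L\beta^2(G^2+\sigma^2)+L\sigma^2(1-\beta)^2}{(1-\beta)^3} ,
\]
and substituting $\alpha=\min\{\frac{1-\beta}{2L(1+((1-\beta)s-1)^2)},\frac{C}{\sqrt{t+1}}\}$ turns $\frac{1}{\alpha}$ into $\Lambda$ in the first term and bounds $\alpha\le\frac{C}{\sqrt{t+1}}$ in the second, giving the claim.

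The main obstacle is the mismatch term: essentially all of the theorem's content lives in the weighted Young step, where $\theta$ must be tuned to $((1-\beta)s-1)^2$ so that this constant migrates out of the variance-type error term (where it sits in Theorem~\ref{thm:3}) and into the step-size restriction, trading a smaller admissible step size for a cleaner dependence on $s$. The only other care needed is bookkeeping of the powers of $(1-\beta)$ through the geometric-sum bound on $\E[\|\p_k\|^2]$ and the degenerate case $((1-\beta)s-1)^2=0$, where the cross term disappears outright.
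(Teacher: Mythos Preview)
Your proposal is correct and follows essentially the same route as the paper: the paper's proof packages the per-iteration descent inequality as a ``Lemma~5'' obtained from Lemma~\ref{lem:1} by replacing the unweighted Cauchy--Schwarz step with a weighted Young inequality whose weight carries the factor $((1-\beta)s-1)^2$, then combines it with Lemma~\ref{lem:2} (your bound on $\E[\|\p_k\|^2]$) so that this factor cancels in the variance term and migrates into the step-size constraint, and finally sums and substitutes $\alpha$ exactly as you describe. The only cosmetic difference is that the paper applies Young directly to $\|\nabla f(\z_k)-\nabla f(\x_k)\|$ before invoking smoothness, whereas you first pass to $L\|\p_k\|$; the resulting constants and the remainder of the argument are identical.
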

{\bf Remark: }
The above result allows us to possibly set a larger initial  value of $\alpha$ for SG (where $s=1/(1-\beta)$) and SNAG (where $s=1$) than that for SHB (where $s=0$). Our empirical studies for deep learning also confirms  this point.

\subsection{Generalization Error Analysis of SUM}
In this section, we provide a unified analysis for the generalization error of the solution returned by SUM after a finite number of iterations. By employing the unified analysis, we are able to analyze the effect of the scalar $s$ on the generalization error. Our analysis is inspired by~\cite{hardticml2016train}, which leverages the uniform stability of a randomized algorithm~\cite{Bousquet:2002:SG:944790.944801} to bound the generalization error of multiple pass SG method. To this end, we first introduce the uniform stability and its connection with generalization error.

Let $\mathcal A:\mathcal S\rightarrow \R^d$ denote a randomized algorithm that generates a model $\mathcal A(\mathcal S)$ from the set of training samples $\S$ of size $n$. The uniform stability measures that how likely the prediction of the learned model on any sample $\data$ would change if one example in $\S$ is changed to a different data. In particular, let $\S'$ denote a set of training examples that differ from $\S$ in one example. 
The algorithm $\A$ is said to be $\epsilon$-uniform stable, if
\begin{align*}
\epsilon(\mathcal A, n)\triangleq \sup_{\S,\S'}\sup_{\data}\E_{\mathcal A}[\ell(\mathcal A(\S), \data) - \ell(\mathcal A(\S'), \data)]\leq \epsilon
\end{align*}
The following proposition states that the generalization error of $\A(\mathcal S)$ is bounded by the uniform stability of $\A$.
\begin{prop}\label{prop:uniform_stable}
(Theorem 2.2 in~\cite{hardticml2016train})
For a randomized algorithm $A:\mathcal S\rightarrow\R$,
\begin{align}
    \E[F(\mathcal A(\S)) - f(\mathcal A(\S))]\leq \epsilon(\mathcal A, n)
\end{align}
\end{prop}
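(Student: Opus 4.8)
The plan is to prove this by the standard symmetrization (``ghost sample'') argument that exploits the i.i.d.\ structure of $\S$ together with the leave-one-out interpretation of uniform stability. Throughout, $\E$ runs over both the draw of $\S=(\data_1,\ldots,\data_n)\sim\D^n$ and the internal randomness of $\A$. First I would expand the population term using a fresh sample $\data'\sim\D$ drawn independently of $\S$, so that
\begin{align*}
\E[F(\A(\S))] = \E_{\S,\A,\data'}[\ell(\A(\S),\data')],
\end{align*}
and expand the empirical term directly by linearity as
\begin{align*}
\E[f(\A(\S))] = \frac{1}{n}\sum_{i=1}^n\E_{\S,\A}[\ell(\A(\S),\data_i)].
\end{align*}

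The heart of the argument is a relabeling step. For each index $i$, let $\S^{(i)}$ denote $\S$ with its $i$-th coordinate replaced by the independent fresh sample $\data'$. Because the $n+1$ samples $\data_1,\ldots,\data_n,\data'$ are i.i.d., the joint law of the ordered tuple $(\S,\data')$ is identical to that of $(\S^{(i)},\data_i)$: one merely swaps the names of $\data_i$ and $\data'$, and applying ``run $\A$ on the first $n$ coordinates, then evaluate $\ell$ at the last coordinate'' to both tuples is the same measurable map. Hence
\begin{align*}
\E_{\S,\A,\data'}[\ell(\A(\S),\data')] = \E_{\S,\A,\data'}[\ell(\A(\S^{(i)}),\data_i)]
\end{align*}
holds for every $i$, where $\A$'s randomness is independent of the data and so commutes with the sample expectations by Fubini. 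Averaging this identity over $i=1,\ldots,n$ expresses $\E[F(\A(\S))]$ as the mean of the $n$ quantities $\E[\ell(\A(\S^{(i)}),\data_i)]$.

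Subtracting the empirical term then yields the clean representation
\begin{align*}
\E[F(\A(\S)) - f(\A(\S))] = \frac{1}{n}\sum_{i=1}^n \E_{\S,\data'}\!\left[\E_{\A}[\ell(\A(\S^{(i)}),\data_i) - \ell(\A(\S),\data_i)]\right].
\end{align*}
Now $\S$ and $\S^{(i)}$ differ in exactly one example, so each inner difference is dominated by the defining supremum of $\epsilon(\A,n)$; since that supremum ranges over all ordered pairs of one-example-apart datasets, it bounds the difference in either direction, and the sign is immaterial. Summing the $n$ identical bounds and dividing by $n$ leaves exactly $\epsilon(\A,n)$, completing the proof.

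I expect the main obstacle to be stating the relabeling step rigorously: one must check that swapping $\data_i$ with the held-out test draw $\data'$ preserves the joint distribution, and that the algorithm's internal randomness is carried through consistently when interchanging the order of expectations. Once that distributional identity is pinned down, the remainder is bookkeeping with linearity of expectation and the supremum in the definition of uniform stability.
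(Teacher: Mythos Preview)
Your argument is correct and is exactly the standard symmetrization / ghost-sample proof of the stability-implies-generalization lemma. Note, however, that the paper does not give its own proof of this proposition: it is simply quoted as Theorem~2.2 of \cite{hardticml2016train} and used as a black box, so there is no ``paper's proof'' to compare against. The argument you have written is essentially the one in \cite{hardticml2016train} (and earlier in \cite{Bousquet:2002:SG:944790.944801}), so you are not diverging from the intended route---you are just supplying the omitted details.
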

The above proposition allows us to use the uniform stability of a randomized algorithm as a proxy of the generalization error.
Below, we will show that SHB and SNAG are more uniform stable than SG, which exhibits that SHB and SNAG has potentially smaller generalization error than SG.

To proceed, we assume that loss function is $G$-Lipschitz continuous, then $|\ell(\mathcal A(\S), \data) - \ell(\mathcal A(\S'), \data)|\leq G\|\mathcal A(\S) - \mathcal A(\S')\|$ and $\epsilon(\A, n)\leq \sup_{\S,\S'}\E[\|\A(\S) - \A(\S')\|]$. To analyze the uniform stability of SUM, we will assume that there are two instances of SUM starting from the same initial solution, with one running on $\S$ and the other one running on $\S'$, where $\S$ and $\S'$ differs only at one example. Let $\x_t=\x_t(\S)$ denote the  $t$-th iterate of the first instance and $\x'_t=\x_t(\S')$ denote the $t$-th iterate of the second instance. Below, we establish a result showing how $\Delta_t=\E[\|\x_t - \x'_t\|]$ grows based on the unified framework in Lemma~\ref{lem:cm}.
\begin{prop}\label{prop:ge}
Assume that $\|\nabla\ell(\x, \data)\|_2\leq G$ for any $\x$ and $\data$ and $\ell(\x,\data)$ is $L$-smooth w.r.t $\x$. For two data sets $\S,\S'$ that differs at one example, let $\x_t$ and $\x'_t$ denote the $t$-th iterates of running SUM for the empirical risk minimization on $S$ and $S'$, we have
\[
\Delta_{t+1} \leq \sum_{k=0}^{t}\frac{2\alpha G}{n}\eta^t_k + \left(1-\frac{1}{n}\right)  \sum_{k=0}^{t}
\alpha L\eta_k^t \Delta_k
\]
with $\Delta_0=0$, where $
\eta_k^t = \frac{1}{1-\beta} - \beta^{t-k+1}\frac{1-s(1-\beta)}{1-\beta}$.
\end{prop}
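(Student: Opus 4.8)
The plan is to start from the cumulative update in Lemma~\ref{lem:cm}, which writes each iterate as the initial point minus a weighted sum of past stochastic gradients with the coefficients $\eta_k^t$. The crucial setup is the standard coupling used in uniform stability: I run the two instances of SUM from the \emph{same} initial point $\x_0$ and driven by the \emph{same} sequence of random indices $i_0,\ldots,i_t$, the only difference being the single example at which $\S$ and $\S'$ disagree. Applying Lemma~\ref{lem:cm} to both trajectories and subtracting, the initial points cancel and I obtain
\[
\x_{t+1} - \x'_{t+1} = -\sum_{k=0}^{t}\alpha\eta_k^t\bigl(\grad(\x_k) - \grad(\x'_k)\bigr),
\]
where $\grad(\x_k)=\nabla\ell(\x_k;\data_{i_k})$ and $\grad(\x'_k)=\nabla\ell(\x'_k;\data'_{i_k})$ use the matched index $i_k$ but possibly different data. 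Taking the Euclidean norm, applying the triangle inequality, using that $\eta_k^t\ge 0$ so the coefficients pass outside the norm unchanged, and then taking expectation gives
\[
\Delta_{t+1}\leq\sum_{k=0}^{t}\alpha\eta_k^t\,\E\bigl[\|\grad(\x_k)-\grad(\x'_k)\|\bigr].
\]

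The second step is to bound the per-term gradient difference $\E[\|\grad(\x_k)-\grad(\x'_k)\|]$. I would condition on the history $i_0,\ldots,i_{k-1}$, which fixes $\x_k$ and $\x'_k$, and then take expectation over the fresh index $i_k$, uniform on $\{1,\ldots,n\}$ and independent of the past. With probability $1-1/n$ the index $i_k$ selects a common example, so both gradients are evaluated on the same loss component and $L$-smoothness yields $\|\grad(\x_k)-\grad(\x'_k)\|\leq L\|\x_k-\x'_k\|$. With probability $1/n$ the index hits the single differing example, and the two gradients are taken on different data; here I bound each by $G$ using the Lipschitz assumption to get $\|\grad(\x_k)-\grad(\x'_k)\|\leq 2G$. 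Averaging over $i_k$ and then over the history produces
\[
\E\bigl[\|\grad(\x_k)-\grad(\x'_k)\|\bigr]\leq\Bigl(1-\tfrac1n\Bigr)L\Delta_k+\tfrac{2G}{n}.
\]

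Substituting this back into the sum and separating the two contributions gives exactly
\[
\Delta_{t+1}\leq\sum_{k=0}^{t}\frac{2\alpha G}{n}\eta_k^t+\Bigl(1-\frac1n\Bigr)\sum_{k=0}^{t}\alpha L\eta_k^t\Delta_k,
\]
which is the claimed recursion, with $\Delta_0=0$ inherited from the shared initialization. I expect the main obstacle to be the bookkeeping around the expectation rather than any single hard estimate: one must be careful that the \emph{same} random index $i_k$ drives both trajectories (otherwise the $1/n$ ``hit'' probability argument breaks), and that conditioning is arranged so that $\x_k,\x'_k$ are measurable before $i_k$ is drawn. A secondary point to verify is the nonnegativity of $\eta_k^t$, which holds for every $s\ge 0$ because $1-s(1-\beta)\le 1$ and $\beta^{t-k+1}\le 1$ force $\beta^{t-k+1}(1-s(1-\beta))\le 1$; this ensures the triangle-inequality step does not introduce absolute values on the coefficients.
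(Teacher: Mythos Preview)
Your proposal is correct and follows essentially the same route as the paper's proof: apply Lemma~\ref{lem:cm} to both coupled trajectories, subtract, take norms and expectation, then split each gradient-difference term into the $1/n$ ``hit'' case (bounded by $2G$) and the $1-1/n$ common-sample case (bounded by $L\|\x_k-\x'_k\|$ via smoothness). If anything, you are more careful than the paper, which neither explicitly states the same-index coupling nor verifies $\eta_k^t\ge 0$ before pulling the coefficients through the triangle inequality.
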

{\bf Remark:} From the above result, we can easily analyze how the value of $s$ affects the growth of $\Delta_t$ that implies the growth of the generalization error of $\x_t$. The values of $\eta_k^t$ for the three variants (i.e., SHB, SNAG and SG) are given by $\eta_k^t(\text{SHB}) = \frac{1}{1-\beta} - \frac{\beta^{t-k+1}}{1-\beta}$, $\eta_k^t(\text{SNAG}) = \frac{1}{1-\beta} - \frac{\beta^{t-k+2}}{1-\beta}$ and $\eta_k^t(\text{SG}) = \frac{1}{1-\beta}$, respectively. It is obvious that $\eta_k^t(\text{SHB}) < \eta_k^t(\text{SNAG})< \eta_k^t(\text{SG})$. As a result, $\Delta_t$ of SG  grows faster than that of SNAG, and then followed by $\Delta_t$ of SHB. Since the generalization error of $\x_t$ is bounded by $\Delta_t$ up to a constant, we can conclude that by running the same number of iterations, the generalization error of the model returned by SHB and SNAG is potentially smaller than that of SG.

\paragraph{More Discussion.}
So far, we have analyzed the convergence rate for optimizing the empirical risk and the generalization error of the learned models of different variants of SUM, which provide answers to the questions raised at the beginning except the last one (why is SNAG observed to yield improvement on the prediction performance over SHB by some studies~\cite{DBLP:conf/icml/SutskeverMDH13}).
Next, we show that how our analysis can shed lights on this question.
In fact, the population risk of $\x_k$ that is usually assessed in practice by the testing error can be decomposed into three parts, consisting of the optimization error, the generalization error, and an optimization independent term, i.e.,
\begin{equation}\label{eqn:dec}
\E[F(\x_t)]=\E[f(\x_*)] +\underbrace{\E[F(\x_t) - f(\x_t)]}\limits_{\text{gen}}+ \underbrace{\E[f(\x_t) - f(\x_*)]}\limits_{\text{opt}}\notag
\end{equation}
where $\x_*$ is the optimal solution to the empirical risk minimization problem.
An informal analysis follows: our Theorem~\ref{thm:4} implies that SNAG converges potentially faster than SHB in terms of the optimization error, while Proposition~\ref{prop:ge} implies that SHB has potentially smaller generalization error. If the optimization error of SNAG decreases faster than the generalization error increases comparing with SHB, then SNAG could yield a solution with a smaller population risk.   
However, a rigorous analysis of the optimization error is complicated by the non-convexity of the problem. In next section, we will present empirical results to corroborate and complete our theoretical analysis.

\begin{figure}[t]
\centering
\subfigure[training]{\includegraphics[scale=0.24]{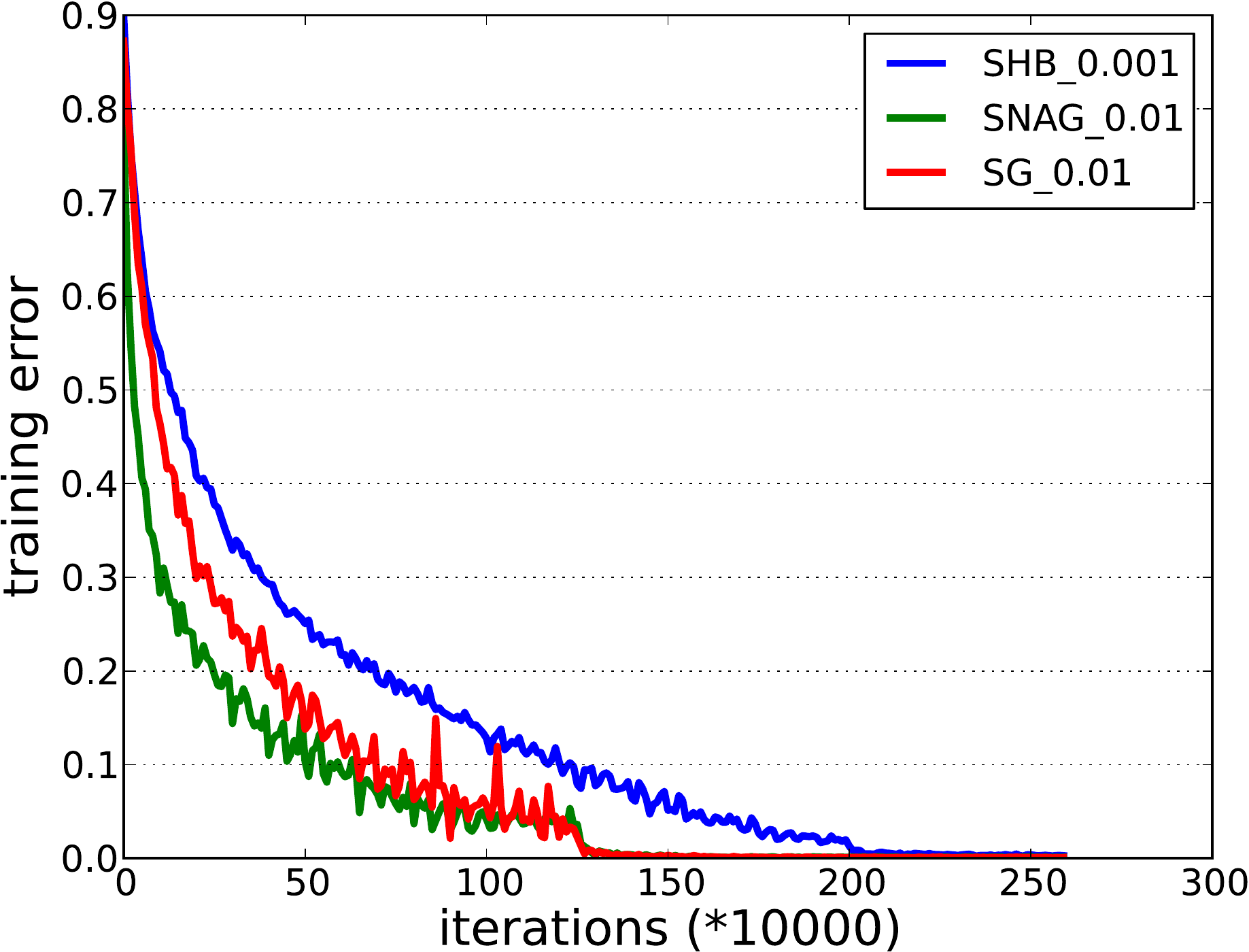}}\hspace*{-0.1in}
\subfigure[testing]{\includegraphics[scale=0.24]{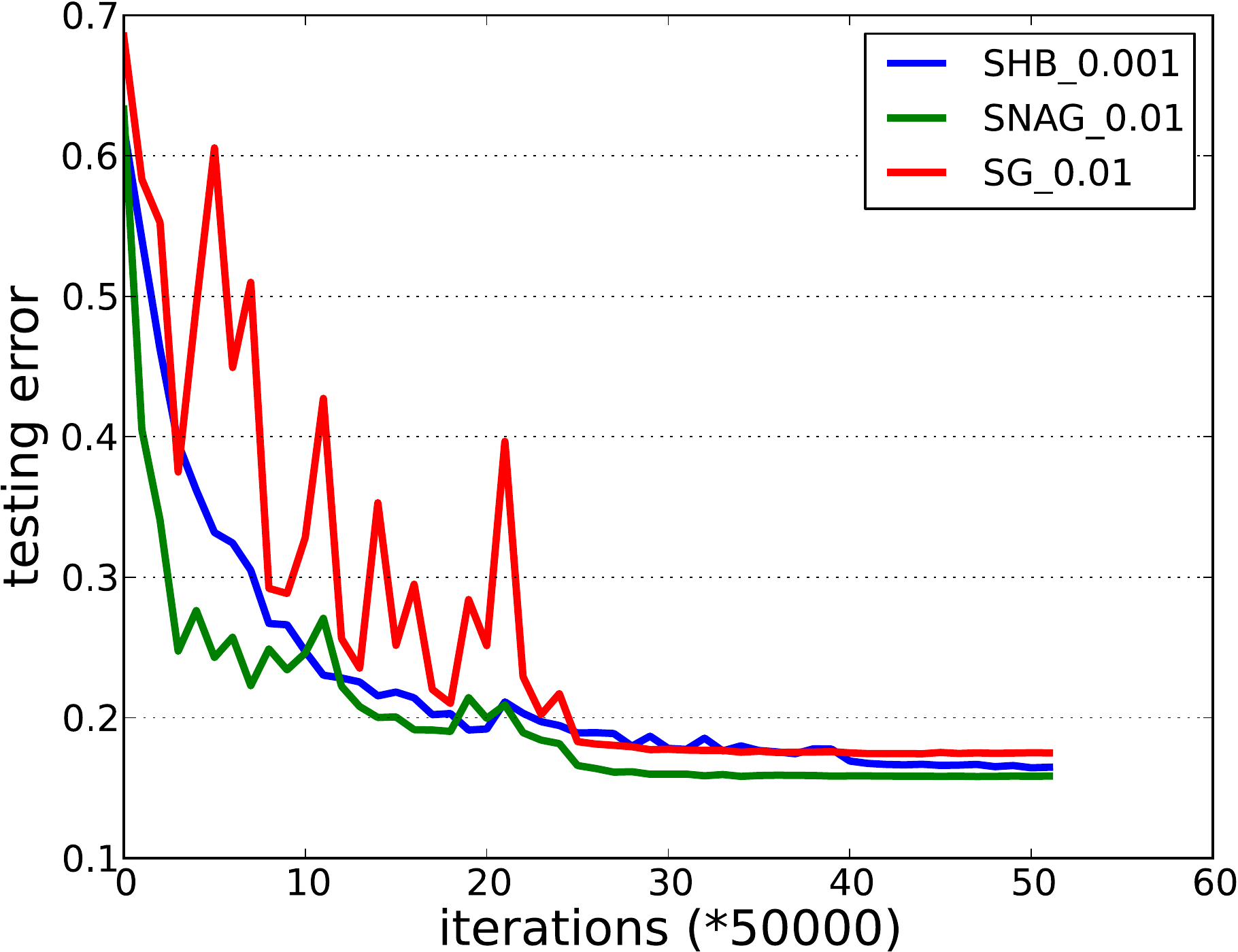}}\hspace*{-0.1in}
\caption{Training and testing error of different methods on CIFAR-10 with the best initial step size $\alpha$.
         The result is consistent with our convergence result in Theorem~\ref{thm:4}.}
\label{fig:best_init_step_size}
\end{figure}

\begin{figure*}[t]
\centering
  \subfigure[training error on CIFA-10]{
    \label{subfigure:train_error_cifar10}
    \includegraphics[scale=0.29]{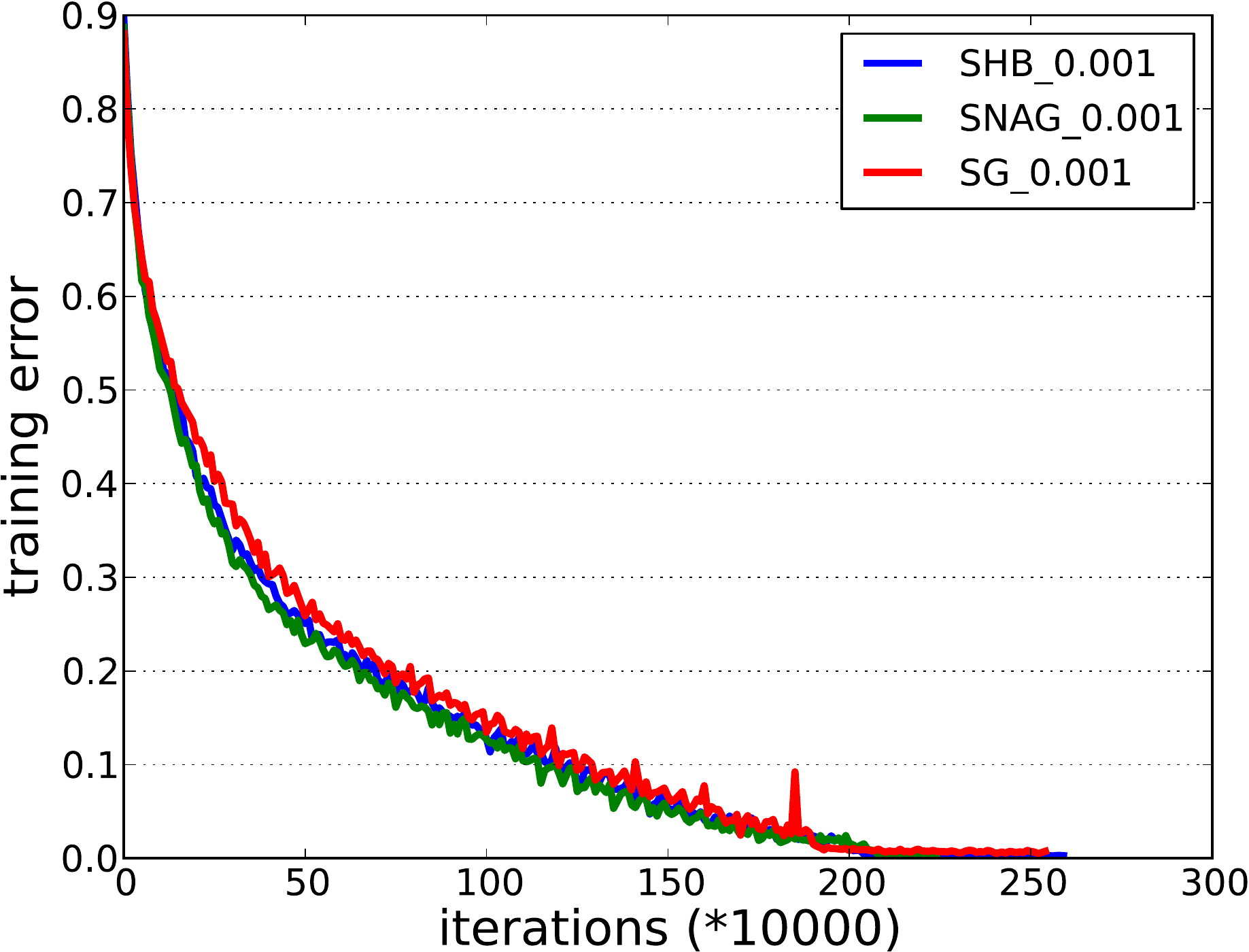}
  }
  \subfigure[testing error on CIFA-10]{
    \label{subfigure:test_error_cifar10}
    \includegraphics[scale=0.29]{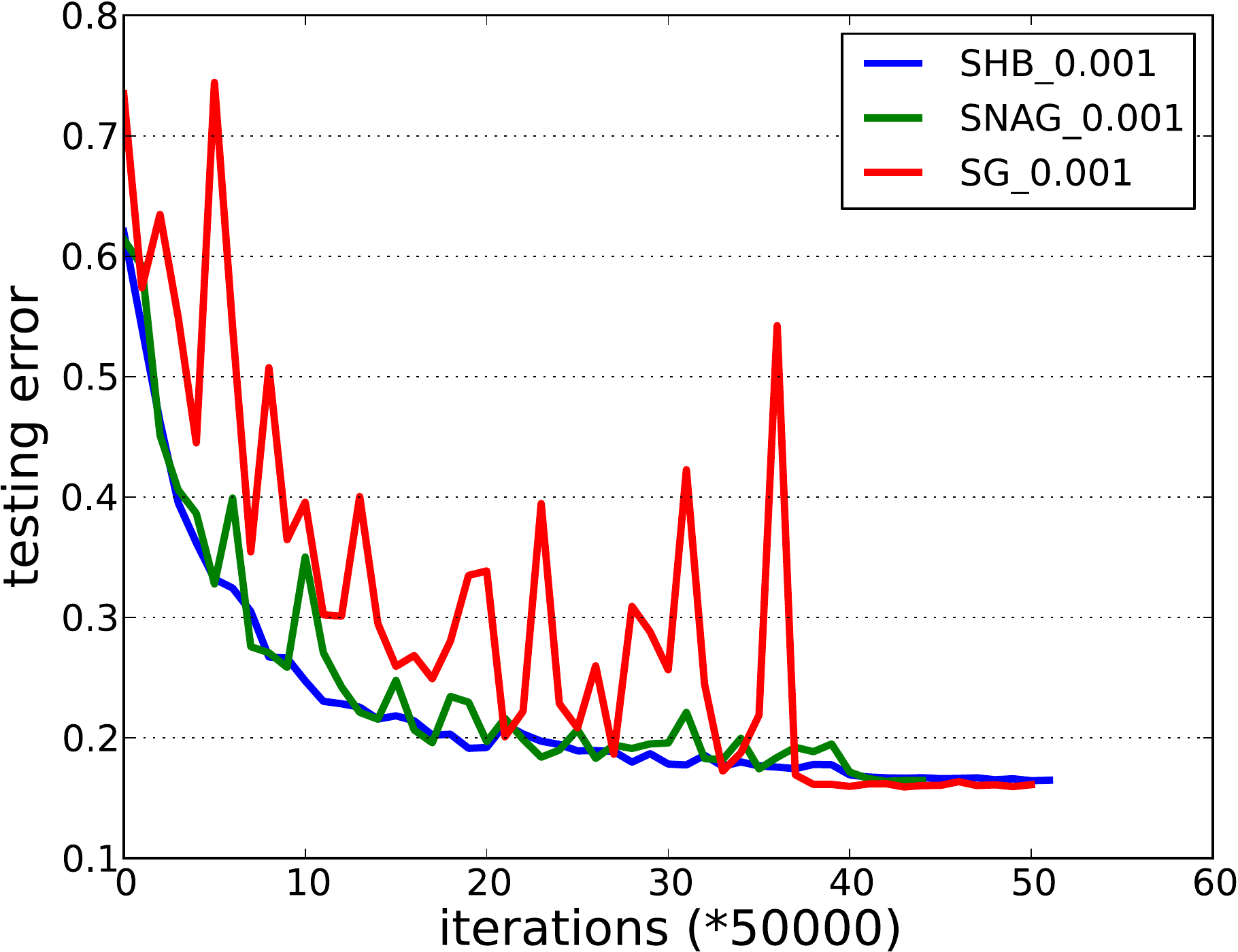}
  }
  \subfigure[abs difference on CIFAR-10]{
    \label{subfigure:abs_diff_cifar10}
    \includegraphics[scale=0.29]{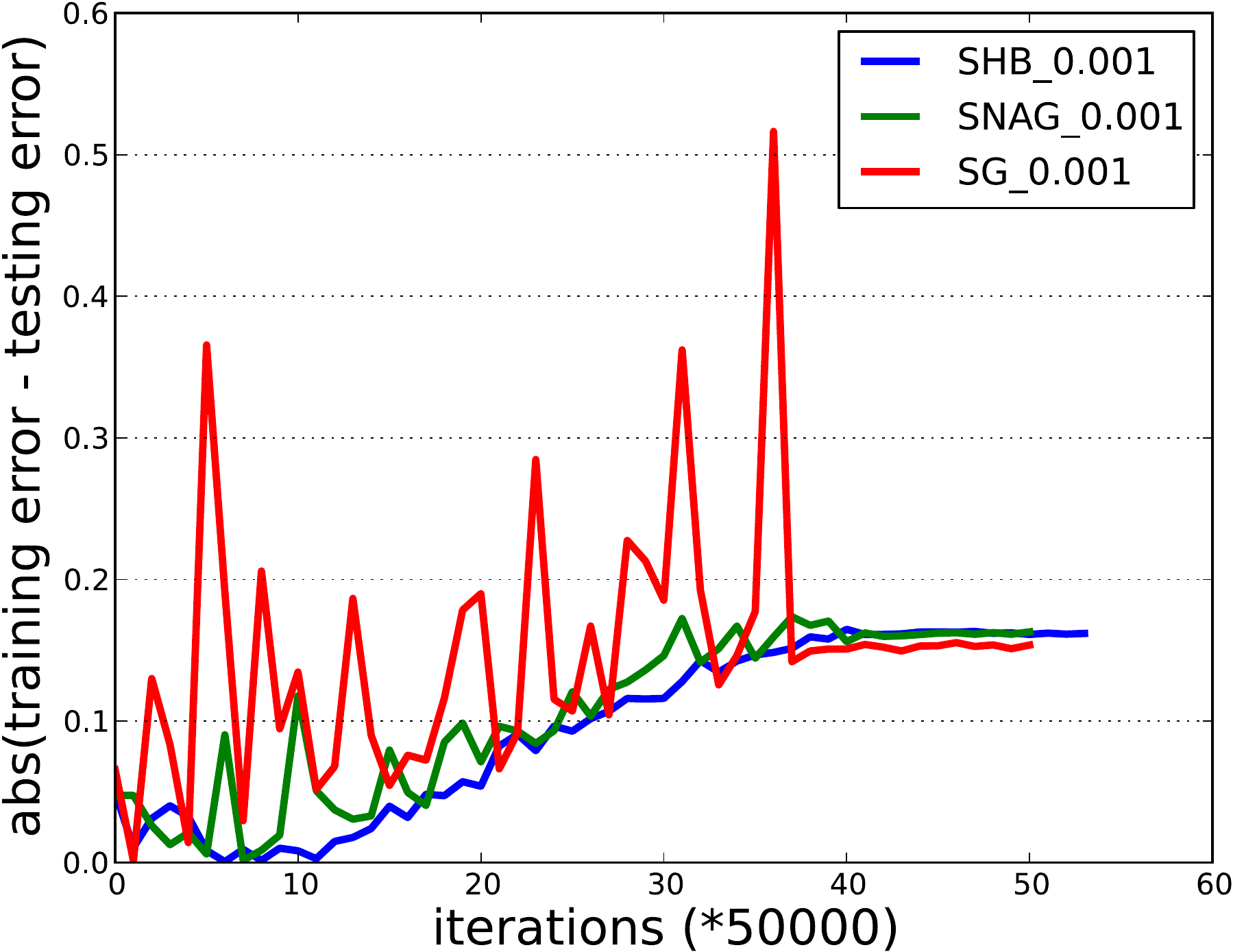}
  }
  
  \subfigure[training error on CIFA-100]{
    \label{subfigure:train_error_cifar100}
    \includegraphics[scale=0.29]{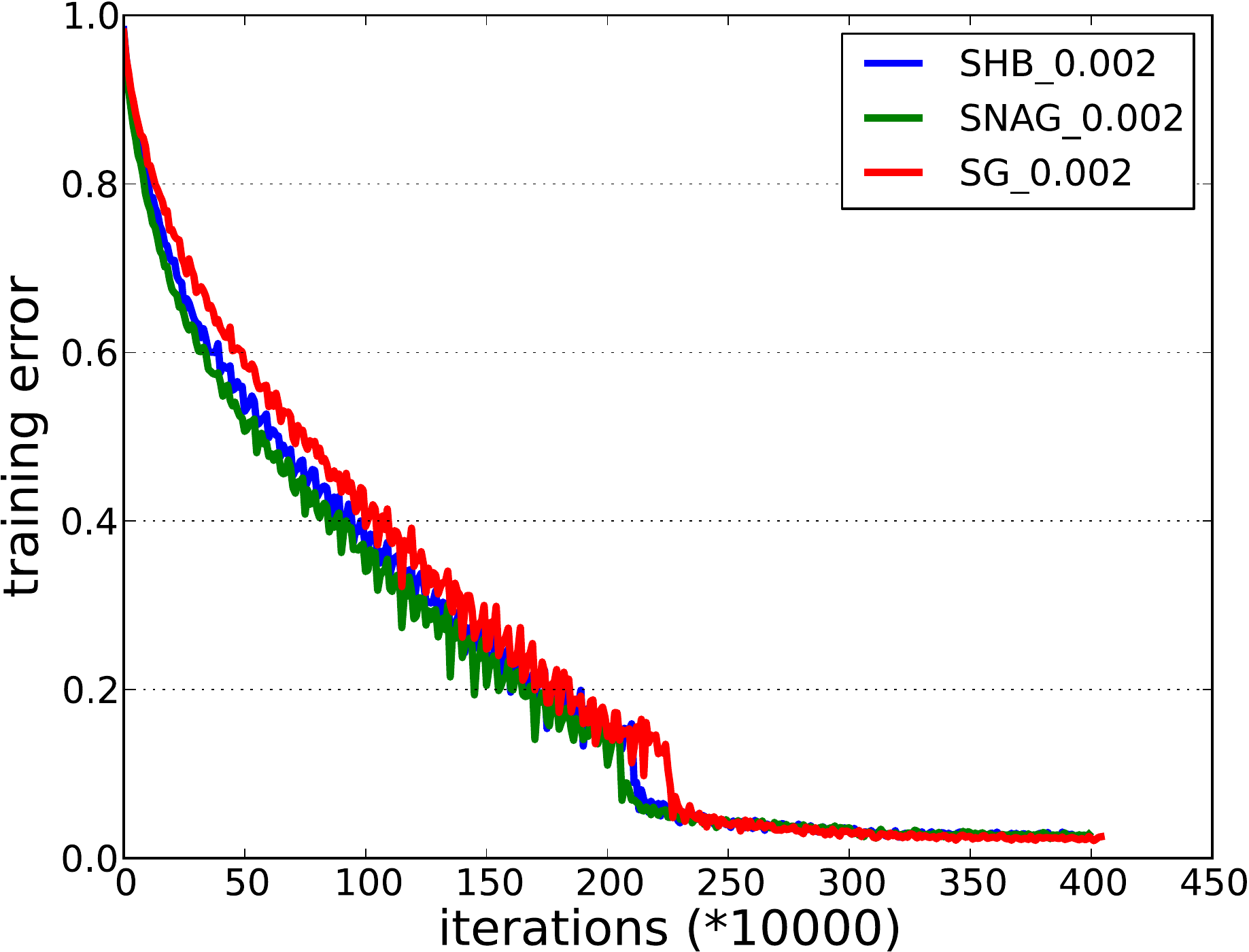}
  }
  \subfigure[testing error on CIFA-100]{
    \label{subfigure:test_error_cifar100}
    \includegraphics[scale=0.29]{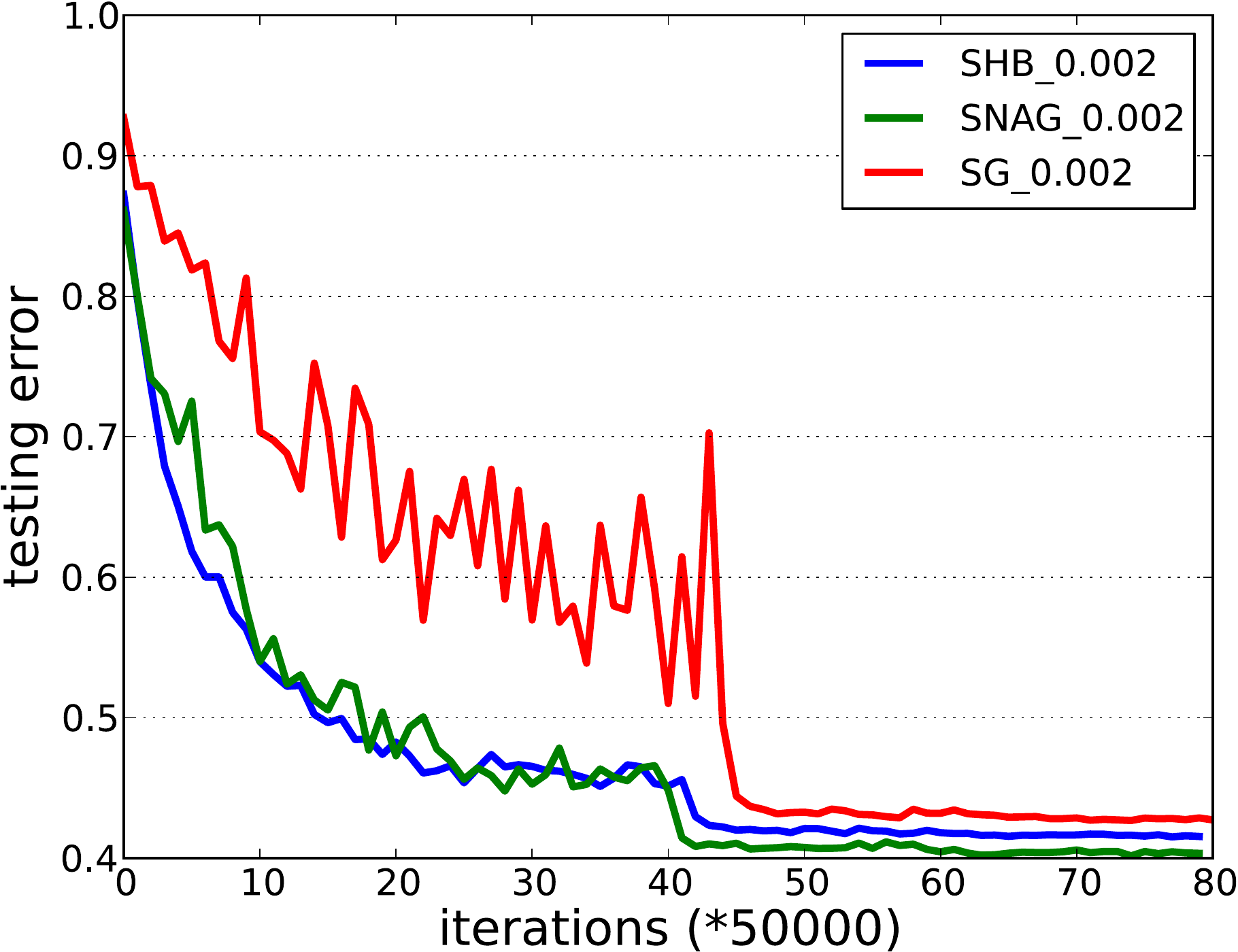}
  }
  \subfigure[abs difference on CIFAR-100]{
    \label{subfigure:abs_diff_cifar100}
    \includegraphics[scale=0.29]{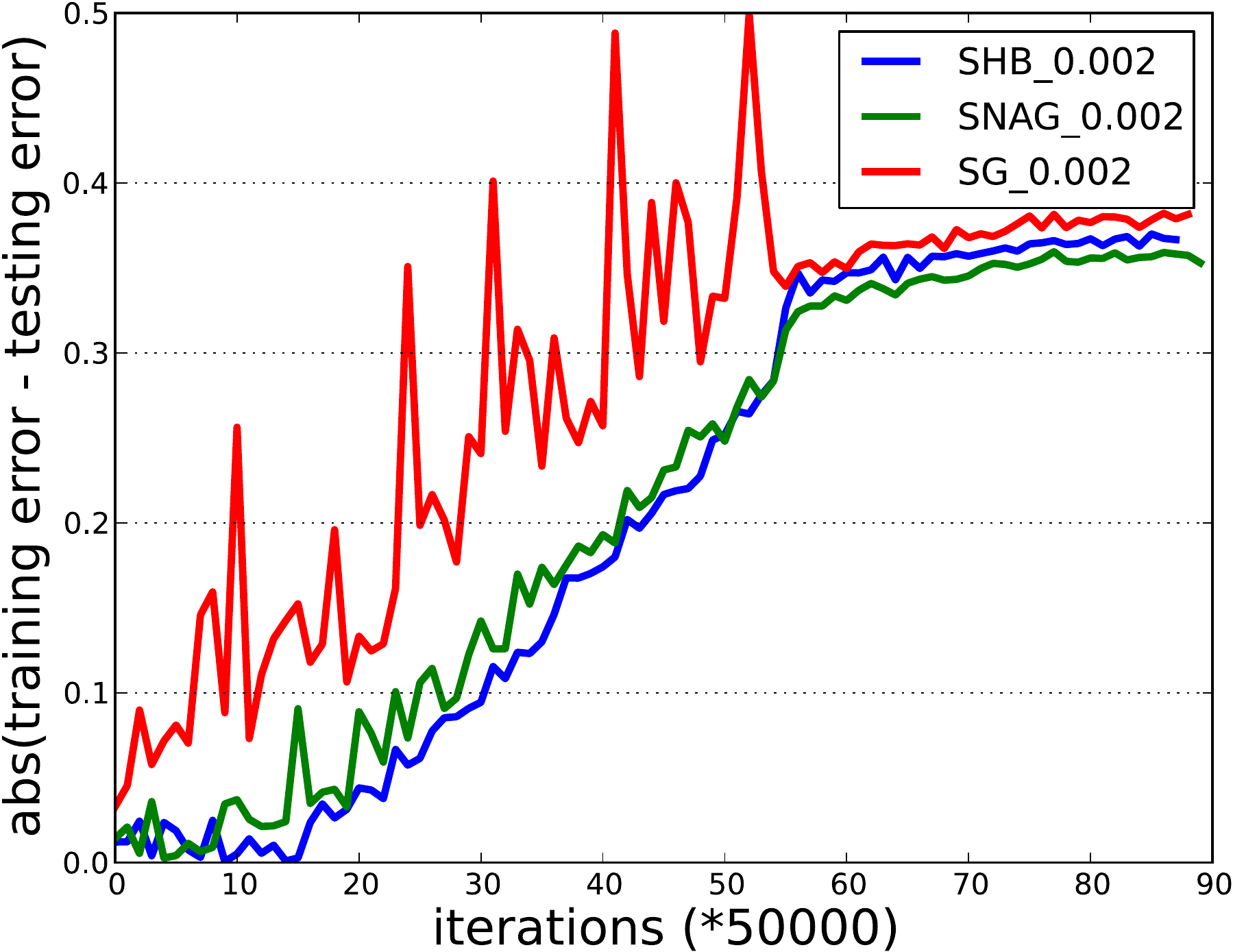}
  }
\caption{
         Training error, testing error and their absolute difference (i.e., $| \text{training error} - \text{testing error} |$) on CIFAR-10 and CIFAR-100 of three SUM variants.
         The numbers in legends indicate the initial step size $\alpha$.
         }
\label{fig:same_init_step_size}
\end{figure*}

\begin{figure*}[t]
\centering
  \subfigure[training error]{
    \label{subfigure:train_error_different_s}
    \includegraphics[scale=0.29]{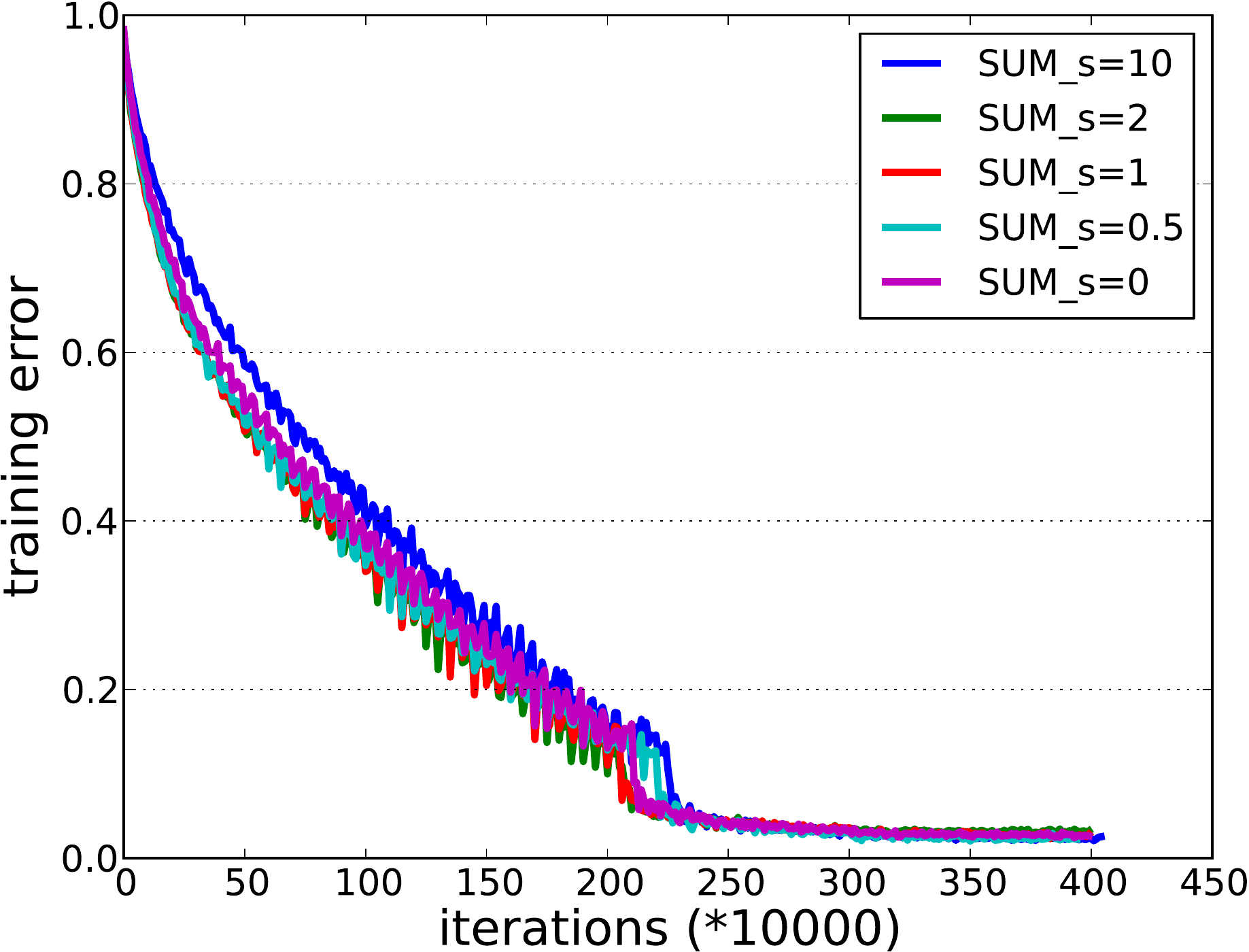}
  }
  \subfigure[testing error]{
    \label{subfigure:test_error_different_s}
    \includegraphics[scale=0.29]{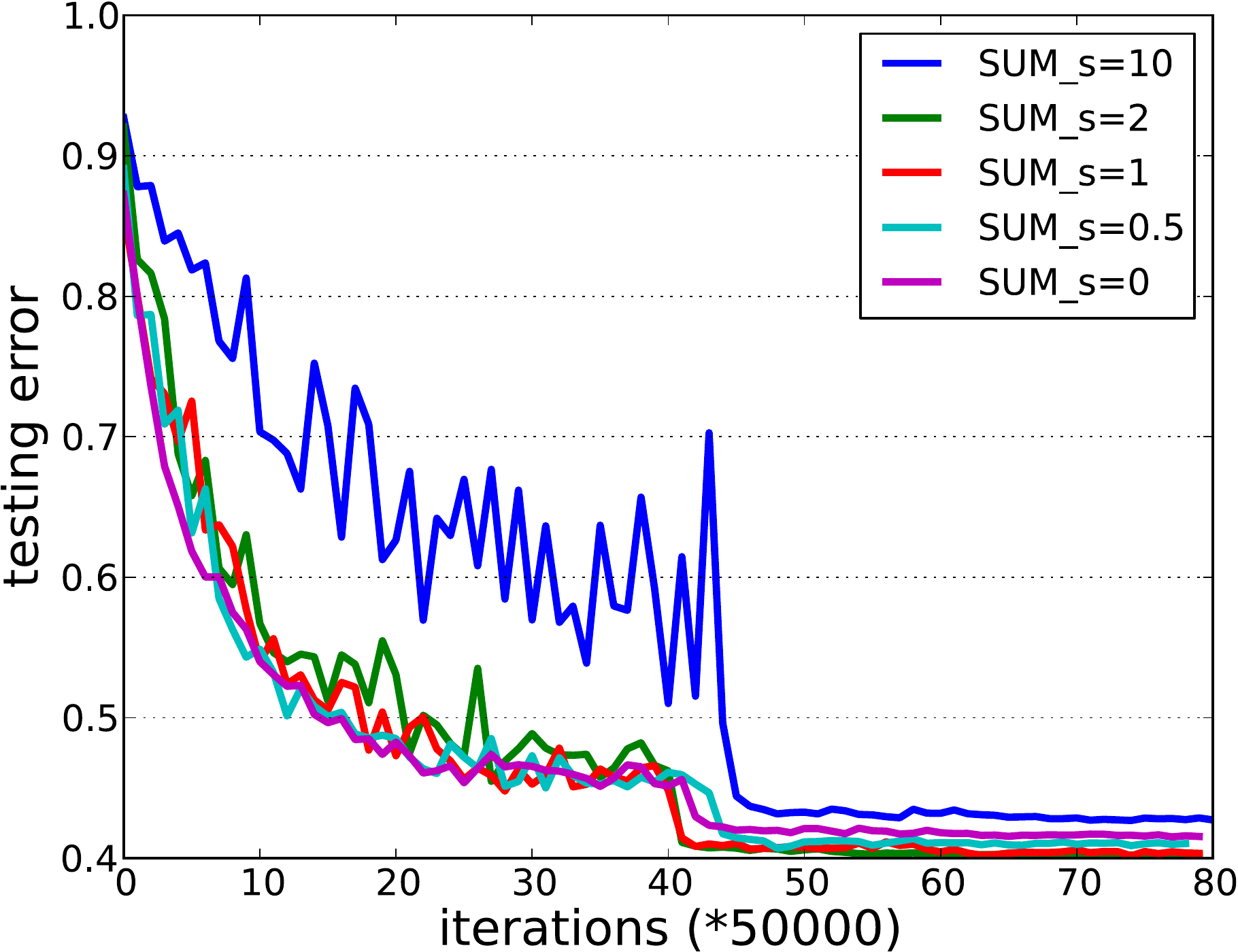}
  }
  \subfigure[abs(training error - testing error)]{
    \label{subfigure:abs_diff_different_s}
    \includegraphics[scale=0.29]{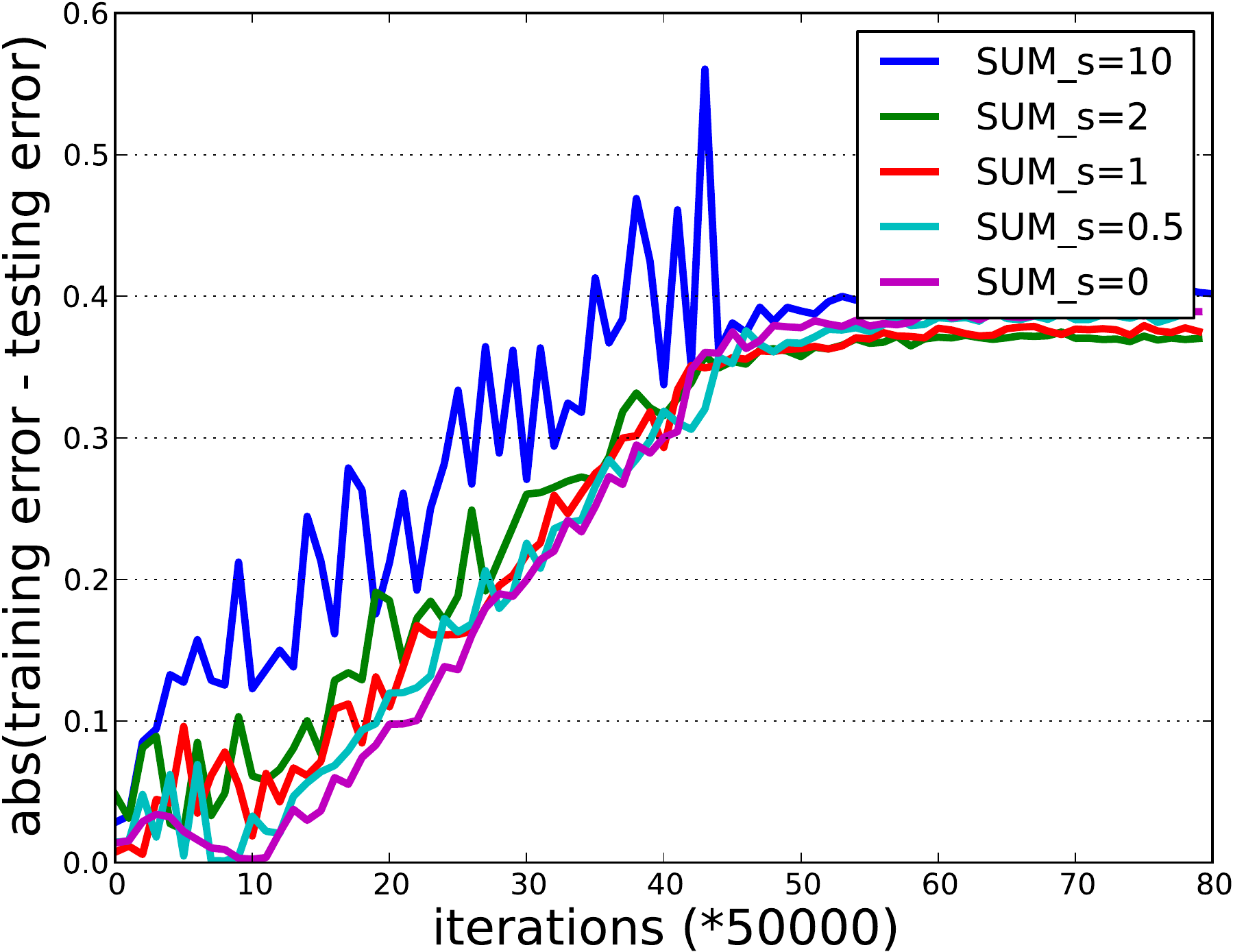}
  }
\caption{Training and testing error and their absolute difference on CIFAR-100 of SUM with different  $s$.}
\label{fig:different_s}
\end{figure*}

\section{Empirical Studies}\label{sec:exp}
In this section, we present empirical results on the non-convex optimization of deep neural networks.
We train a deep convolutional neural network (CNN) for classification on two benchmark datasets, i.e., CIFAR-10 and CIFAR-100.
Both datasets contain $50,000$ training images of size $32*32$ from $10$ classes (CIFAR-10) or $100$ classes (CIFAR-100) and $10,000$ testing images of the same size.
The employed CNN consists of 3 convolutional layers and 2 fully-connected layers.
Each convolutional layer is followed by a max pooling layer.
The output of the last fully-connected layer is fed into a $10$-class or $100$-class softmax loss function.
We emphasize that we do not intend to obtain the state-of-the-art prediction performance by trying different network structures and different engineering tricks, but instead focus our attention on verifying the theoretical analysis.
We compare the three variants of SUM, i.e., SHB, SNAG, and SG, which corresponds to $s=0$, $s=1$ and $s = 1/(1-\beta)$ in~(\ref{eqn:um}).
We fix the momentum constant $\beta=0.9$ and the regularization parameter of weights to $0.0005$.
We use a mini-batch of size $128$ to compute a stochastic gradient at each iteration.
All three methods use the same initialization.
We follow the procedure in~\cite{krizhevsky2012imagenet} to set the step size $\alpha$, i.e., initially giving a relatively large step size and and decreasing the step size by $10$ times after certain number of iterations when observing the performance on testing data saturates.

\textbf{Results on CIFAR-10.} We first present the convergence results of different methods with the best initial step size. In particular, for the initial step size, we search in a range ($\{0.001, 0.002, 0.005, 0.01, 0.02, 0.05\}$) for different methods and select the best one that yields the fastest convergence in training error.  In particular,  for SHB the best initial step size is $0.001$ and that for SNAG and SG is $0.01$.
In fact, a larger initial step size (e.g, $0.002$) for SHB gives a divergent result.
The training and testing error of different methods versus the number of iterations is plotted in Figure~\ref{fig:best_init_step_size}. 
This result is consistent with our convergence result in Theorem~\ref{thm:4}. 

Next, we plot the performance of different methods with the same initial step size $0.001$ in Figure~\ref{fig:same_init_step_size}. 
We report the training error, the testing error and their absolute difference in Figure~\ref{subfigure:train_error_cifar10},~\ref{subfigure:test_error_cifar10} and~\ref{subfigure:abs_diff_cifar10}, respectively. 
We use the absolute difference between the training and testing error as an estimate of the generalization error. 
We can see that the convergence of training error of the three methods are very close, which is consistent with our theoretical result in Theorem~\ref{thm:3}. 
Moreover, the behavior of the absolute difference between the training and testing error is also consistent with the theoretical result in Proposition~\ref{prop:ge}, i.e, SG has a larger generalization error than SNAG and SHB. 


\textbf{Results on CIFAR-100.}
We plot the training and testing error and their absolute difference of the three methods with the same initial step sizes ($0.002$) in Figure~\ref{subfigure:train_error_cifar100},~\ref{subfigure:test_error_cifar100} and~\ref{subfigure:abs_diff_cifar100}, respectively.
We observe similar trends in the training error and the generalization error, i.e., the three methods have similar performance (convergence speed) in training error but exhibit different degree of generalization error. The testing error curve shows that SNAG achieves the best prediction performance on the testing data. 

Finally, we present a comparison of SUM with different values of $s$ including variants besides SHB, SNAG  and SG.
In particular, we compare SUM with $s\in\{0, 0.5, 1, 2, 10\}$ and the same initial step size $0.002$.
Note that $s=0$ corresponds to SHB, $s=1$ corresponds to SNAG, $s=10$ corresponds to SG since $\beta=0.9$ and $s=2$ corresponds to a new variant.
The results on the CIFAR-100 data are shown in Figure~\ref{fig:different_s}.
From the results, we can observe that  the convergence of  training error for  different variants perform similarly.
For the generalization error, we observe a clear trend from $s=10$ to $s=0$ in that the generalization error decreases.

\section{Conclusion}
We have developed a unified framework of stochastic momentum methods that subsumes SHB, SNAG and SG as special cases, which have been widely adopted in training deep neural network. We also analyzed convergence of the training  for non-convex optimization and generalization error for learning of the unified stochastic momentum methods.
The unified framework and analysis bring more insights about differences between different methods and help explain experimental results for optimization deep neural networks.
In particular, the momentum term helps improve the generalization performance but not helps speed up the training process.   


\section*{Acknowledgements}

This work is partially supported by NSF-1545995, the Data to Decisions Cooperative Research Centre and ARC DP180100106.
Most work of Y. Yan was done when he was visiting the University of Iowa.

\bibliographystyle{named}
\bibliography{all}

\appendix

\section{Proof of Proposition~\ref{proposition:special_cases}}

We re-present Proposition~\ref{proposition:special_cases}.

\begin{reprop}
\label{reproposition:special_cases}
SUM~(\ref{eqn:um}) reduces to the three variants SG~(\ref{eqn:SG}), SHB~(\ref{eqn:shb1}) and SNAG~(\ref{eqn:sag1}) by setting $s = \frac{1}{1 - \beta}$, $s = 0$ and $s = 1$, respectively.
Particularly, the update of SG is
$
\x_{k+1} = \x_{k} - \frac{\alpha}{1 - \beta} \G(\x_{k}, \xi_{k}) ,
$
where the step size is $\frac{\alpha}{1 - \beta}$.
\end{reprop}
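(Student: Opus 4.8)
The plan is to handle the three values of $s$ separately, since each one collapses the auxiliary sequence $\{\y^s_k\}$ in a structurally different way; the two momentum cases are essentially direct substitutions, whereas the SG case requires a short induction. For $s=0$, the second line of~(\ref{eqn:um}) gives $\y^s_{k+1}=\x_k$ (the $s\alpha\G(\x_k)$ term vanishes), so $\y^s_{k+1}=\x_k$ and $\y^s_k=\x_{k-1}$ for $k\geq 1$. Substituting this together with $\y_{k+1}=\x_k-\alpha\G(\x_k)$ into the third line yields $\x_{k+1}=\x_k-\alpha\G(\x_k)+\beta(\x_k-\x_{k-1})$, which is exactly SHB~(\ref{eqn:shb1}); I would also verify the base case $k=0$, where $\y^s_0=\y^s_1=\x_0$ reproduces the convention $\x_{-1}=\x_0$. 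For $s=1$, the auxiliary update coincides with $\y$, that is $\y^s_{k+1}=\x_k-\alpha\G(\x_k)=\y_{k+1}$, and since $\y^s_0=\x_0=\y_0$ the two sequences are identical; the third line then reads $\x_{k+1}=\y_{k+1}+\beta(\y_{k+1}-\y_k)$, which is precisely SNAG~(\ref{eqn:sag1}).

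For $s=\tfrac{1}{1-\beta}$, I would prove by induction on $k$ that $\x_{k+1}=\x_k-\tfrac{\alpha}{1-\beta}\G(\x_k)$. The base case follows from $\y^s_1-\y^s_0=-s\alpha\G(\x_0)$ combined with the identity $1+\beta s=\tfrac{1}{1-\beta}$. For the inductive step I would expand the momentum increment as $\y^s_{k+1}-\y^s_k=(\x_k-\x_{k-1})-s\alpha(\G(\x_k)-\G(\x_{k-1}))$ and substitute the inductive hypothesis $\x_k-\x_{k-1}=-\tfrac{\alpha}{1-\beta}\G(\x_{k-1})$. The crucial observation is that with $s\alpha=\tfrac{\alpha}{1-\beta}$ the two $\G(\x_{k-1})$ terms cancel exactly, leaving $\y^s_{k+1}-\y^s_k=-\tfrac{\alpha}{1-\beta}\G(\x_k)$; combining this with $\y_{k+1}=\x_k-\alpha\G(\x_k)$ and $1+\tfrac{\beta}{1-\beta}=\tfrac{1}{1-\beta}$ then closes the induction and identifies the update as SG with effective step size $\tfrac{\alpha}{1-\beta}$.

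The main obstacle is the SG case. In contrast to $s=0$ and $s=1$, where the claim reduces to recognizing a definitional rewrite, here the increment $\y^s_{k+1}-\y^s_k$ does not manifestly reduce to a single gradient term, and carries an apparent dependence on the stale gradient $\G(\x_{k-1})$. It is only through the inductive cancellation described above, enabled precisely by the choice $s=\tfrac{1}{1-\beta}$, that the memory of earlier iterates disappears and the recursion becomes memoryless. I would therefore present the two momentum cases briefly and devote the bulk of the argument to making this cancellation explicit.
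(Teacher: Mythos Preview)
Your proposal is correct and follows essentially the same approach as the paper: the $s=0$ and $s=1$ cases are handled by the identical direct substitutions, and the $s=\tfrac{1}{1-\beta}$ case is done by the same induction (base case plus showing the $\G(\x_{k-1})$ dependence cancels). The only cosmetic difference is that the paper organizes the inductive step by introducing the quantity $\x_{k+1}-\x_k+s\alpha\G(\x_k)$ and showing it satisfies the homogeneous recursion $(\cdot)_{k+1}=\beta(\cdot)_k$ with zero initial value, whereas you carry out the equivalent computation directly on $\y^s_{k+1}-\y^s_k$.
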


\begin{proof}

We prove the result by separately discussing three different values of $s$, i.e., $0$, $1$ and $\frac{1}{1-\beta}$.
We show that these three settings in fact correspond to SHB, SNAG and SG, respectively.

{\bf {1.}} When $s=0$, then $\y^s_{k+1} = \x_k$, the update of $\x_{k+1}$ becomes
\begin{align}\label{proof:proposition:special_cases:eqn:shb1}
\x_{k+1} = \x_k - \alpha \G(\x_k) + \beta(\x_{k} - \x_{k-1})  ,
\end{align}
which is exactly the update of the HB method in~(\ref{eqn:shb1}) and~(\ref{eqn:shb2}).

{\bf {2.}} When $s=1$, then $\y^s_{k+1} = \y_{k+1}$, then the update in~(\ref{eqn:um}) reduces to that in~(\ref{eqn:sag1}) or~(\ref{eqn:sag2}) of the NAG method.

{\bf {3.}} The last special case corresponds to using $s = \frac{1}{1-\beta}$ for SG.
We will show that the update of $\x_{k+1}$ is equivalent to
\begin{align}\label{eqn:gd}
\x_{k+1} = \x_k - \frac{\alpha}{1 - \beta}\G(\x_k), k\geq 0,
\end{align}
which is exactly the update of the gradient method but with a step size $\frac{\alpha}{1 - \beta}$.
First, we verify~(\ref{eqn:gd}) holds for $k=0$. 
From the updates in~(\ref{eqn:um}), we have
\begin{align*}
\x_1  & = \y_1 + \beta(\y^s_1 - \y^s_0) \\
&= \x_0 - \alpha \G(\x_0) + \beta(\x_0 - s\alpha \G(\x_0) - \x_0)\\
& =  \x_0 - \alpha \G(\x_0) - s \beta \alpha \G(\x_0) \\
&= \x_0 - \alpha \G(\x_0)(1 + \frac{\beta}{1-\beta}) = \x_0 - \frac{\alpha}{1-\beta}\G(\x_0)  .
\end{align*}
Then we show~(\ref{eqn:gd}) holds for any $k\geq 1$. 
From the updates in~(\ref{eqn:um}), we have
\begin{align*}
&\x_{k+1} - \x_k = - \alpha \G(\x_k) + \beta (\y^s_{k+1} - \y^s_k) \\
& = - \alpha \G(\x_k) + \beta(\x_k - s\alpha\G(\x_k) - \x_{k-1} + s\alpha \G(\x_{k-1}))  .
\end{align*}
Then
\begin{align*}
&\x_{k+1} - \x_k + s\alpha  \G(\x_k) = \beta(\x_k - \x_{k-1} + s\alpha \G(\x_{k-1}))\\
& + (s - 1 - \beta s)\alpha \G(\x_k)  .
\end{align*}
Since $s = \frac{1}{1-\beta}$, then $s - 1 - \beta s=0$, thus for any $k\geq 1$
\begin{align*}
\x_{k+1} - \x_k + s\alpha  \G(\x_k) = \beta(\x_k - \x_{k-1} + s\alpha \G(\x_{k-1})), 
\end{align*}
Therefore
\[
\x_{k+1} - \x_k + s\alpha \G(\x_k)= \beta^{k} (\x_1 - \x_0 + \frac{\alpha}{1-\beta}\G(\x_0) ) = 0 ,
\]
which leads to~(\ref{eqn:gd}).

\end{proof}

\section{Proof of Lemma~\ref{lem:k}}

We re-present Lemma~\ref{lem:k}.

\begin{relemma}\label{relem:k} 
Let $\p_k$ be
\begin{equation}\label{proof:eqn:p}
\p_k = \left\{
\begin{aligned}
&\frac{\beta}{1-\beta}(\x_k - \x_{k-1} + s\alpha \G(\x_{k-1})),\: k\geq 1\\
& 0,\quad k=0\\
\end{aligned}\right.  \tag{\ref{eqn:p}}
\end{equation}
and
\begin{equation}\label{proof:eqn:v}
\v_k = \frac{(1-\beta)}{\beta}\p_k   .  \tag{\ref{eqn:v}}
\end{equation}
Then for any $k\geq 0$, we have
\begin{align}
\label{proof:eqn:rec}
\x_{k+1} + \p_{k+1} & = \x_k + \p_k  - \frac{\alpha}{ 1- \beta}\G(\x_k)   ,  \tag{\ref{eqn:rec}} \\
\v_{k+1} & = \beta \v_k + ((1-\beta)s - 1) \alpha \G(\x_k)  \label{proof:eqn:rec2}   . \tag{\ref{eqn:rec2}}
\end{align}
\end{relemma}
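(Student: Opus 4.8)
The plan is to derive both recursions from a single intermediate identity, namely an expression for $\x_{k+1}-\x_k$ obtained by eliminating the auxiliary sequences $\y_{k+1}$ and $\y^s_k$ from the SUM update~(\ref{eqn:um}). First I would substitute the first two lines of~(\ref{eqn:um}) into the third to get $\x_{k+1} = \x_k - \alpha\G(\x_k) + \beta(\y^s_{k+1}-\y^s_k)$, and then expand $\y^s_{k+1}-\y^s_k = (\x_k - \x_{k-1} + s\alpha\G(\x_{k-1})) - s\alpha\G(\x_k)$ for $k\geq 1$. The crucial observation is that the parenthesised quantity is exactly $\frac{1-\beta}{\beta}\p_k$ by the definition~(\ref{eqn:p}), which converts the two-step update into a one-step relation involving only $\x_k$, $\p_k$, and $\G(\x_k)$.

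Using this substitution I would compute $\x_{k+1}-\x_k+s\alpha\G(\x_k)$ and show it equals $(1-\beta)\p_k + ((1-\beta)s-1)\alpha\G(\x_k)$. Multiplying by $\frac{\beta}{1-\beta}$ then yields the key single-step recursion $\p_{k+1} = \beta\p_k + \frac{\beta}{1-\beta}((1-\beta)s-1)\alpha\G(\x_k)$, from which~(\ref{eqn:rec2}) follows immediately after scaling by $\frac{1-\beta}{\beta}$: indeed $\v_{k+1}=\frac{1-\beta}{\beta}\p_{k+1}=\beta\v_k+((1-\beta)s-1)\alpha\G(\x_k)$, using $\beta\v_k=(1-\beta)\p_k$.

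To obtain~(\ref{eqn:rec}) I would form $\x_{k+1}-\x_k+\p_{k+1}-\p_k$ and substitute both the expression for $\x_{k+1}-\x_k$ and the $\p_{k+1}$ recursion just derived. The terms proportional to $\p_k$ cancel, and the coefficient of $\alpha\G(\x_k)$ collapses to $-\frac{1}{1-\beta}$ after combining $-1-\beta s + \frac{\beta}{1-\beta}((1-\beta)s-1)$; this gives $\x_{k+1}+\p_{k+1}=\x_k+\p_k-\frac{\alpha}{1-\beta}\G(\x_k)$.

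Finally, the base case $k=0$ needs separate attention because $\p_0=0$ by the second branch of~(\ref{eqn:p}), so the computation $\y^s_1-\y^s_0=-s\alpha\G(\x_0)$ (using $\y^s_0=\x_0$) does not fit the $k\geq 1$ pattern; I would verify directly that $\x_1+\p_1=\x_0-\frac{\alpha}{1-\beta}\G(\x_0)$ and that~(\ref{eqn:rec2}) holds at $k=0$. I expect the main obstacle to be bookkeeping: correctly collecting the coefficients of $\alpha\G(\x_k)$ so that the $\G$-terms telescope to $-\frac{1}{1-\beta}$, and making sure the $k=0$ boundary, where the definition of $\p_k$ switches, is handled consistently rather than by blindly applying the $k\geq 1$ formula.
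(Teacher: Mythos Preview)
Your proposal is correct and follows essentially the same approach as the paper: both arguments expand the SUM update~(\ref{eqn:um}) to express $\x_{k+1}-\x_k$ in terms of $\x_k-\x_{k-1}+s\alpha\G(\x_{k-1})$ and then perform straightforward algebra to verify~(\ref{eqn:rec}) and~(\ref{eqn:rec2}). The only minor differences are organizational: the paper verifies the two recursions independently by computing each side and comparing, whereas you first isolate the single-step recursion for $\p_{k+1}$ and derive both conclusions from it; and the paper absorbs the base case into the generic computation via the convention $\x_{-1}=\x_0$, $\G(\x_{-1})=0$, whereas you check $k=0$ separately.
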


\begin{proof}

Let us first write down the updates:
\begin{equation*}
\begin{aligned}
\y_{k+1} &  = \x_k - \alpha \G(\x_k)\\
\y^s_{k+1} & = \x_k - s\alpha \G(\x_k)\\
\x_{k+1} & = \y_{k+1} + \beta(\y^s_{k+1} - \y^s_k)
\end{aligned}
\end{equation*}
We can see that
\begin{align*}
\x_{k+1} &= \x_k - \alpha\G(\x_k)\nonumber \\
&+ \beta(\x_k - s\alpha\G(\x_k) - \x_{k-1} + s\alpha\G(\x_{k-1}))   . \nonumber
\end{align*}
If we define $\x_{-1} = \x_0$ and $\G(\x_{-1})=0$, the above equation holds for any $k\geq 0$. 
Similarly, we can write $\p_k$ as
\begin{align*}
\p_k = \frac{\beta}{1-\beta}(\x_k - \x_{k-1} + s\alpha\G(\x_{k-1}))
\end{align*}
for any $k\geq 0$.

Next, we prove that~(\ref{eqn:rec}) and~(\ref{eqn:rec2}) hold for any $k\geq 0$.
By the definition of $\p_k$, we have
\begin{align*}
&\x_{k+1} + \p_{k+1} = \x_{k+1} + \frac{\beta}{1-\beta}(\x_{k+1} - \x_k + s\alpha\G(\x_k))\\
&=\frac{1}{1-\beta}\x_{k+1} - \frac{\beta}{1-\beta}\left(\x_k  - s\alpha\G(\x_k)\right)\\
&\overset{\text{(\ref{eqn:um})}} {=}  \frac{1}{1-\beta}\left[\x_k - \alpha \G(\x_k)  \right.\\
&+\left.\beta(\x_k - s\alpha G(\x_k) - \x_{k-1}+ s\alpha G(\x_{k-1}))\right]  \\
&  -\frac{\beta}{1-\beta}\left(\x_k  - s\alpha\G(\x_k)\right)\\
& = \frac{1+\beta}{1-\beta}\x_k- \frac{1+s\beta}{1-\beta}\alpha \G(\x_k) \nonumber\\
& - \frac{\beta}{1-\beta}(\x_{k-1} - s\alpha \G(\x_{k-1}))\\
& - \frac{\beta}{1-\beta}(\x_k - s\alpha\G(\x_k))\\
& = \frac{1}{1-\beta} (\x_k - \alpha\G(\x_k)) -  \frac{\beta}{1-\beta}(\x_{k-1} - s\alpha \G(\x_{k-1})).
\end{align*}
Similarly, we have
\begin{align*}
\x_k + \p_k & = \frac{1}{1-\beta}\x_k - \frac{\beta}{1-\beta}(\x_{k-1} - s\alpha\G(\x_{k-1})).
\end{align*}
Thus
\begin{align*}
\x_{k+1} + \p_{k+1} = \x_k + \p_k - \frac{1}{1-\beta}\alpha\G(\x_k),
\end{align*}
which verifies~(\ref{eqn:rec}).

To verify~(\ref{eqn:rec2}), we use the definition of $\v_k$ and $\p_k$, and have
\begin{align*}
\v_{k+1} &= \frac{1-\beta}{\beta}\p_{k+1} = \x_{k+1} - \x_k + s\alpha\G(\x_k)\\
&\overset{\text{(\ref{eqn:um})}}{=} \x_k - \alpha\G(\x_k)\\
&+ \beta(\x_k - s\alpha\G(\x_k) - \x_{k-1} + s\alpha\G(\x_{k-1}))\\
&  - \x_k + s\alpha\G(\x_k)\\
& = \beta(\x_k - \x_{k-1}) + [s(1-\beta)-1]\alpha\G(\x_k)\\
& + \beta s\alpha\G(\x_{k-1})
\end{align*}
and
\begin{align*}
  & \beta\v_k +[(1-\beta)s - 1]\alpha\G(\x_k)  \nonumber\\
= & \beta(\x_k - \x_{k-1}) + \beta s\alpha \G(\x_{k-1})\\
& + [(1-\beta)s - 1]\alpha\G(\x_k)
\end{align*}
We can see that
\begin{align*}
\v_{k+1} = \beta\v_k + [(1-\beta)s  - 1]\alpha\G(\x_k),
\end{align*}
which verifies~(\ref{eqn:rec2}).

\end{proof}

\section{Proof of Lemma~\ref{lem:cm}}

We re-present Lemma~\ref{lem:cm}.

\begin{relemma}\label{relem:cm}
Given the update in~(\ref{eqn:um}), for any $k\geq 0$ we have
\begin{align}
    \x_{k+1}  = \x_{0} - \sum_{\tau=0}^{k} \left\{ \frac{1}{1-\beta} - \beta^{k-\tau+1}\frac{1-s(1-\beta)}{1-\beta} \right\}\alpha \grad(\x_{\tau})   .
\end{align}
\end{relemma}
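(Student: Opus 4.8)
The plan is to prove the cumulative update formula in Lemma~\ref{lem:cm} by induction on $k$, using the recursion~(\ref{eqn:rec}) from Lemma~\ref{lem:k} as the engine. The key observation is that~(\ref{eqn:rec}) tells us that the combined quantity $\x_k + \p_k$ behaves like a plain gradient descent sequence with effective step size $\frac{\alpha}{1-\beta}$, so it telescopes cleanly. First I would unroll~(\ref{eqn:rec}) from iteration $0$ to $k$ to obtain
\begin{align*}
\x_{k+1} + \p_{k+1} = \x_0 + \p_0 - \frac{\alpha}{1-\beta}\sum_{\tau=0}^{k}\grad(\x_\tau),
\end{align*}
and since $\p_0 = 0$ by definition~(\ref{eqn:p}), this simplifies to $\x_{k+1} = \x_0 - \frac{\alpha}{1-\beta}\sum_{\tau=0}^{k}\grad(\x_\tau) - \p_{k+1}$. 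The task then reduces to finding a closed-form expression for $\p_{k+1}$ in terms of the gradient history.

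To pin down $\p_{k+1}$, I would work with the $\v_k$ sequence instead, since~(\ref{eqn:rec2}) gives the clean linear recursion $\v_{k+1} = \beta\v_k + ((1-\beta)s - 1)\alpha\grad(\x_k)$ with $\v_0 = 0$. Unrolling this geometric-type recursion yields
\begin{align*}
\v_{k+1} = ((1-\beta)s - 1)\alpha\sum_{\tau=0}^{k}\beta^{k-\tau}\grad(\x_\tau),
\end{align*}
which I would verify by a short secondary induction. Then using $\p_{k+1} = \frac{\beta}{1-\beta}\v_{k+1}$ from~(\ref{eqn:v}), I get the explicit form $\p_{k+1} = \frac{\beta((1-\beta)s - 1)}{1-\beta}\alpha\sum_{\tau=0}^{k}\beta^{k-\tau}\grad(\x_\tau)$.

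Finally I would substitute this expression for $\p_{k+1}$ back into $\x_{k+1} = \x_0 - \frac{\alpha}{1-\beta}\sum_{\tau=0}^{k}\grad(\x_\tau) - \p_{k+1}$ and collect the coefficient of each $\grad(\x_\tau)$. The coefficient becomes
\begin{align*}
\frac{\alpha}{1-\beta} + \frac{\beta((1-\beta)s - 1)}{1-\beta}\beta^{k-\tau}\alpha = \alpha\left\{\frac{1}{1-\beta} - \beta^{k-\tau+1}\frac{1 - s(1-\beta)}{1-\beta}\right\},
\end{align*}
where I have rewritten $\beta\cdot\beta^{k-\tau} = \beta^{k-\tau+1}$ and flipped the sign of the $((1-\beta)s-1)$ factor to $-(1-s(1-\beta))$, which matches the stated coefficient $\eta^k_\tau$ exactly.

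The main obstacle I anticipate is purely bookkeeping rather than conceptual: keeping the index shifts consistent (the exponent $k-\tau+1$ arises from the $\beta$ factor in $\p = \frac{\beta}{1-\beta}\v$ multiplying the $\beta^{k-\tau}$ inside the unrolled $\v$), and correctly tracking the two sign flips that convert $((1-\beta)s - 1)$ into the final $-(1-s(1-\beta))$ form. Both the telescoping of~(\ref{eqn:rec}) and the unrolling of~(\ref{eqn:rec2}) are routine given the lemma, so the proof is essentially a matter of careful substitution; alternatively, one could collapse both steps into a single direct induction on~$k$ that posits the claimed formula and verifies it against~(\ref{eqn:um}) at the inductive step, but routing through the already-established recursions of Lemma~\ref{lem:k} is cleaner.
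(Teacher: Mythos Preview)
Your proposal is correct and actually takes a cleaner route than the paper. The paper works directly from the update~(\ref{eqn:um}): it first derives the one-step recursion $\x_{k+1} - (\x_k - s\alpha\grad(\x_k)) = \beta(\x_k - (\x_{k-1} - s\alpha\grad(\x_{k-1}))) + ((1-\beta)s-1)\alpha\grad(\x_k)$, unrolls it, and then telescopes $\x_{k+1}$ down to $\x_0$, producing a double sum $\sum_{\tau=0}^{k}\sum_{i=0}^{\tau}((1-\beta)s-1)\beta^{\tau-i}\alpha\grad(\x_i)$ that must be interchanged and collapsed via a geometric series. Your approach instead routes through the already-proved recursions~(\ref{eqn:rec}) and~(\ref{eqn:rec2}) of Lemma~\ref{lem:k}: telescoping~(\ref{eqn:rec}) immediately gives the $\frac{1}{1-\beta}$ contribution, unrolling~(\ref{eqn:rec2}) gives $\v_{k+1}$ and hence $\p_{k+1}$ in closed form, and subtracting $\p_{k+1}$ supplies the $\beta^{k-\tau+1}$ correction directly. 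This avoids the double-sum swap entirely and makes the origin of each term in the coefficient transparent; the paper's version is more self-contained (it does not invoke Lemma~\ref{lem:k}) but pays for that with heavier index manipulation.
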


\begin{proof}

Start with the update of~(\ref{eqn:um}).
We can rewrite the update by
\begin{align}
\x_{k+1} & = \x_{k} - \alpha \grad(\x_{k}) \nonumber\\
         & \quad + \beta ( \x_{k} - s\alpha\grad(\x_{k}) - (\x_{k-1} - s\alpha\grad(\x_{k-1}) ) )    \nonumber\\
    \Rightarrow \nonumber\\
\x_{k+1} & - (\x_{k} - s\alpha\grad(\x_{k})) \nonumber\\
         & = \beta (\x_{k} - (\x_{k-1} - s\alpha\grad(\x_{k-1}))) \nonumber\\
         & \quad + ((1 - \beta)s - 1)\alpha\grad(\x_{k})  \nonumber\\
    \Rightarrow \nonumber\\
\x_{k+1} & - (\x_{k} - s\alpha\grad(\x_{k})) \nonumber\\
         & = \sum_{\tau = 0}^{k} ((1 - \beta)s - 1)\beta^{k - \tau} \alpha \grad (\x_{\tau})  \nonumber
\end{align}

\begin{align}\label{eq:proof_lemma2_2}
\Rightarrow & \nonumber\\
\x_{k+1} & = \x_{k} - s\alpha\grad(\x_{k}) + \sum_{\tau = 0}^{k} ((1 - \beta)s - 1)\beta^{k - \tau} \alpha \grad (\x_{\tau}) \nonumber\\
\Rightarrow & \nonumber\\
\x_{k+1} & = \x_{0} - \sum_{\tau = 0}^{k} s \alpha \grad (\x_{\tau}) \nonumber\\
         & \quad + \sum_{\tau = 0}^{k} \sum_{i=0}^{\tau} ((1 - \beta)s - 1) \beta^{\tau - i}\alpha\grad(\x_{i})  \nonumber\\
\Rightarrow & \nonumber\\
\x_{k+1} & = \x_{0} - \sum_{\tau = 0}^{k} s \alpha \grad (\x_{\tau}) \nonumber\\
         & \quad + \frac{(1 - \beta)s - 1}{1 - \beta} \sum_{\tau = 0}^{k} (1 - \beta^{k - \tau + 1})\alpha\grad(\x_{\tau}) \nonumber\\
\Rightarrow & \nonumber\\
\x_{k+1} & = \x_{0}   \nonumber\\
         - & \sum_{\tau = 0}^{k} \left\{ \frac{1}{1 - \beta} - \beta^{k-\tau+1} \frac{1 - s(1 - \beta)}{1 - \beta} \right\} \alpha\grad(\x_{\tau})  .
\end{align}

\end{proof}

\section{Proof of Theorem~\ref{thm:3}}  
\label{section:proof:thm:3}

Before proving Theorem~\ref{thm:3}, we present two key lemmas.

\begin{lemma}\label{lem:1} 
Let $\z_k = \x_k + \p_k$.
For SUM,  we have for any $k\geq 0$,
\begin{align*}
&\E[f(\z_{k+1})- f(\z_k)] \leq\frac{1}{2L}\E [\| \nabla f(\z_k) - \nabla f(\x_k)\|^2]\\
&+ \left(\frac{L\alpha^2}{(1-\beta)^2} -  \frac{\alpha}{1-\beta}\right)\E[\|\nabla f(\x_k)\|^2] + \frac{L\alpha^2\sigma^2}{2(1-\beta)^2}  .
\end{align*}
\end{lemma}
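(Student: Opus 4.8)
The plan is to combine the $L$-smoothness of $f$ with the clean one-step recursion for $\z_k = \x_k + \p_k$ provided by Lemma~\ref{lem:k}. Indeed, with $\z_k = \x_k + \p_k$, recursion~(\ref{eqn:rec}) states directly that
\[
\z_{k+1} = \x_{k+1} + \p_{k+1} = \x_k + \p_k - \frac{\alpha}{1-\beta}\G(\x_k) = \z_k - \frac{\alpha}{1-\beta}\G(\x_k),
\]
so the auxiliary sequence $\{\z_k\}$ performs a plain stochastic-gradient step with effective step size $\frac{\alpha}{1-\beta}$, except that the gradient is evaluated at $\x_k$ rather than at $\z_k$. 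First I would apply the descent inequality for an $L$-smooth function,
\[
f(\z_{k+1}) \leq f(\z_k) + \langle \nabla f(\z_k), \z_{k+1}-\z_k\rangle + \frac{L}{2}\|\z_{k+1}-\z_k\|^2,
\]
and substitute $\z_{k+1}-\z_k = -\frac{\alpha}{1-\beta}\G(\x_k)$.

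Next I would take the expectation conditioned on all randomness through iteration $k-1$; under this conditioning $\x_k$, $\z_k$, $\nabla f(\x_k)$ and $\nabla f(\z_k)$ are deterministic. Unbiasedness $\E[\G(\x_k)]=\nabla f(\x_k)$ turns the linear term into $-\frac{\alpha}{1-\beta}\langle\nabla f(\z_k),\nabla f(\x_k)\rangle$, while the bias--variance split $\E[\|\G(\x_k)\|^2] = \|\nabla f(\x_k)\|^2 + \E[\|\G(\x_k)-\nabla f(\x_k)\|^2] \leq \|\nabla f(\x_k)\|^2 + \sigma^2$ controls the quadratic term and creates the additive $\frac{L\alpha^2\sigma^2}{2(1-\beta)^2}$.

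The main obstacle is the cross term $-\frac{\alpha}{1-\beta}\langle\nabla f(\z_k),\nabla f(\x_k)\rangle$: since $\p_k\neq 0$ in general, $\z_k\neq\x_k$ and the two gradients genuinely differ, so this term cannot simply be turned into $-\frac{\alpha}{1-\beta}\|\nabla f(\x_k)\|^2$. The key step is to write $\langle\nabla f(\z_k),\nabla f(\x_k)\rangle = \langle\nabla f(\z_k)-\nabla f(\x_k),\nabla f(\x_k)\rangle + \|\nabla f(\x_k)\|^2$ and bound the mismatch by Young's inequality with the parameter tuned to $L$, i.e. with $c=\frac{\alpha}{1-\beta}$,
\[
-c\langle\nabla f(\z_k)-\nabla f(\x_k),\nabla f(\x_k)\rangle \leq \frac{1}{2L}\|\nabla f(\z_k)-\nabla f(\x_k)\|^2 + \frac{Lc^2}{2}\|\nabla f(\x_k)\|^2.
\]
This is precisely the choice that makes the first coefficient equal $\frac{1}{2L}$, matching the target statement.

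Finally I would collect terms: the $-c\|\nabla f(\x_k)\|^2$ piece of the cross term, the $\frac{Lc^2}{2}\|\nabla f(\x_k)\|^2$ from Young's inequality, and the $\frac{L\alpha^2}{2(1-\beta)^2}\|\nabla f(\x_k)\|^2$ coming from the quadratic estimate combine into the stated coefficient $\frac{L\alpha^2}{(1-\beta)^2}-\frac{\alpha}{1-\beta}$. Rearranging $f(\z_k)$ to the left and taking total expectation via the tower property yields the claim. I would deliberately keep the term $\frac{1}{2L}\E[\|\nabla f(\z_k)-\nabla f(\x_k)\|^2]$ rather than bound it here, since smoothness later replaces it by $\frac{L}{2}\|\z_k-\x_k\|^2 = \frac{L}{2}\|\p_k\|^2$, which is where the companion lemma and the recursion~(\ref{eqn:rec2}) for $\v_k$ enter the convergence analysis.
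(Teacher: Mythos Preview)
Your proposal is correct and follows essentially the same route as the paper: apply the smoothness descent inequality to $\z_k$, substitute $\z_{k+1}-\z_k=-\frac{\alpha}{1-\beta}\G(\x_k)$ from Lemma~\ref{lem:k}, take expectation using unbiasedness and the variance bound, decompose $\langle\nabla f(\z_k),\nabla f(\x_k)\rangle$, and handle the cross term with the Young-type bound (the paper calls this step ``Cauchy--Schwarz'' but uses exactly your weighted inequality with parameter $L$). The term collection and final coefficients match.
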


\begin{proof}
Let $\delta_k  = \G_k - \nabla f(\x_k)$. Then $\E[\delta_k]=0$. Beginning by exploring  the smoothness of $f(\x)$ we have
\begin{align*}
&f(\z_{k+1}) \leq f(\z_k) + \nabla f(\z_k)^{\top}(\z_{k+1} - \z_k) \\
&+ \frac{L\|\z_{k+1} - \z_k\|^2}{2}\\
&\overset{\text{(\ref{eqn:rec})}}{=}  f(\z_k) - \frac{\alpha}{1-\beta}\nabla f(\z_k)^{\top}\G_k +  \frac{L}{2}\frac{\alpha^2}{(1-\beta)^2}\|\G_k\|^2
\end{align*}
\begin{align*}
&  = f(\z_k) - \frac{\alpha}{1-\beta}\nabla f(\z_k)^{\top}(\delta_k+\nabla f(\x_k)) \\
&+  \frac{L}{2}\frac{\alpha^2}{(1-\beta)^2}\|\G_k\|^2\\
&  = f(\z_k) - \frac{\alpha}{1-\beta}\nabla f(\z_k)^{\top}\delta_k - \frac{\alpha}{1-\beta} \nabla f(\z_k)^{\top}\nabla f(\x_k)\\
&+ \frac{L}{2}\frac{\alpha^2}{(1-\beta)^2}\|\G_k\|^2\\
&  = f(\z_k) - \frac{\alpha}{1-\beta}\nabla f(\z_k)^{\top}\delta_k \\
&- \frac{\alpha}{1-\beta} (\nabla f(\z_k) - \nabla f(\x_k))^{\top}\nabla f(\x_k) \\
&- \frac{\alpha}{1-\beta}\|\nabla f(\x_k)\|^2 + \frac{L}{2}\frac{\alpha^2}{(1-\beta)^2}\|\delta_k + \nabla f(\x_k)\|^2 .
\end{align*}

Taking expectation on both sides
\begin{align*}
&\E[f(\z_{k+1})- f(\z_k)]\\
&\leq \E\left[ -\frac{\alpha}{1-\beta} (\nabla f(\z_k) - \nabla f(\x_k))^{\top}\nabla f(\x_k) \right.\\
&\left.-  \frac{\alpha}{1-\beta}\|\nabla f(\x_k)\|^2  +\frac{L}{2}\frac{\alpha^2}{(1-\beta)^2}\|\nabla f(\x_k)\|^2 \right]\nonumber\\
&  +\frac{L}{2}\frac{\alpha^2}{(1-\beta)^2}\E[\|\delta_k\|^2]\nonumber\\
&= \E\left[ -\frac{\alpha}{1-\beta} (\nabla f(\z_k) - \nabla f(\x_k))^{\top}\nabla f(\x_k) \right] + \\
&\left(\frac{L}{2}\frac{\alpha^2}{(1-\beta)^2} -  \frac{\alpha}{1-\beta}\right)\E[\|\nabla f(\x_k)\|^2] + \frac{L\alpha^2}{2(1-\beta)^2}\sigma^2\\
&\leq\E \left[\frac{1}{2L}\| \nabla f(\z_k) - \nabla f(\x_k)\|^2 + \frac{L\alpha^2}{2(1-\beta)^2}\|\nabla f(\x_k)\|^2\right]\nonumber\\
&+ \left(\frac{L}{2}\frac{\alpha^2}{(1-\beta)^2} -  \frac{\alpha}{1-\beta}\right)\E[\|\nabla f(\x_k)\|^2] + \frac{L\alpha^2}{2(1-\beta)^2}\sigma^2       , \nonumber   
\end{align*}
where the last inequality uses the Cauchy-Schwarz inequality.
\end{proof}

\begin{lemma}\label{lem:2}
For SUM, we have for any  $k\geq 0$,
\begin{align*}
\E[&\|\nabla f(\z_k) - \nabla f(\x_k)\|^2]  \nonumber\\
\qquad & \leq \frac{L^2\beta^2((1-\beta)s -1)^2\alpha^2(G^2+\sigma^2)}{(1-\beta)^4}.
\end{align*}
\end{lemma}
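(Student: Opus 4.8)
The plan is to reduce the whole statement to a second-moment bound on the auxiliary momentum vector $\v_k$ and then to solve the recursion (\ref{eqn:rec2}) in closed form. First I would invoke $L$-smoothness of $f$ together with $\z_k=\x_k+\p_k$ to get
\[
\|\nabla f(\z_k)-\nabla f(\x_k)\|\leq L\|\z_k-\x_k\|=L\|\p_k\|,
\]
so that $\E[\|\nabla f(\z_k)-\nabla f(\x_k)\|^2]\leq L^2\E[\|\p_k\|^2]$. By the definition (\ref{eqn:v}) we have $\p_k=\frac{\beta}{1-\beta}\v_k$, hence it suffices to control $\E[\|\v_k\|^2]$; the claimed bound will then follow after multiplying by the factor $L^2\beta^2/(1-\beta)^2$.

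Next I would solve the recursion (\ref{eqn:rec2}) explicitly. Since $\p_0=0$ forces $\v_0=0$, iterating $\v_{k+1}=\beta\v_k+((1-\beta)s-1)\alpha\G(\x_k)$ gives the closed form
\[
\v_k=((1-\beta)s-1)\alpha\sum_{\tau=0}^{k-1}\beta^{k-1-\tau}\G(\x_\tau).
\]
This writes $\v_k$ as a geometrically weighted sum of the stochastic gradients, with weights $w_\tau=\beta^{k-1-\tau}$ of total mass $\sum_{\tau=0}^{k-1}\beta^{k-1-\tau}=\frac{1-\beta^k}{1-\beta}\leq\frac{1}{1-\beta}$.

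The main technical step is to bound the expected squared norm of this weighted sum. I expect the real obstacle to be that the summands are correlated: $\x_\tau$ depends on the earlier $\xi$'s, so the usual cross-term cancellation via conditional independence is unavailable. The clean way around this is to apply the weighted Jensen (Cauchy--Schwarz) inequality
\[
\Big\|\sum_{\tau}w_\tau\G(\x_\tau)\Big\|^2\leq\Big(\sum_\tau w_\tau\Big)\sum_\tau w_\tau\|\G(\x_\tau)\|^2,
\]
which needs no independence at all. Taking expectations and using the per-step moment bound $\E[\|\G(\x_\tau)\|^2]=\E[\|\G(\x_\tau)-\nabla f(\x_\tau)\|^2]+\E[\|\nabla f(\x_\tau)\|^2]\leq\sigma^2+G^2$ (from conditioning on $\x_\tau$ together with $\E[\G(\x_\tau)\mid\x_\tau]=\nabla f(\x_\tau)$) yields
\[
\E\Big[\Big\|\sum_{\tau}w_\tau\G(\x_\tau)\Big\|^2\Big]\leq\Big(\sum_\tau w_\tau\Big)^2(G^2+\sigma^2)\leq\frac{G^2+\sigma^2}{(1-\beta)^2}.
\]

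Finally I would assemble the pieces. This gives $\E[\|\v_k\|^2]\leq((1-\beta)s-1)^2\alpha^2\frac{G^2+\sigma^2}{(1-\beta)^2}$, whence $\E[\|\p_k\|^2]=\frac{\beta^2}{(1-\beta)^2}\E[\|\v_k\|^2]$, and multiplying by $L^2$ recovers exactly
\[
\E[\|\nabla f(\z_k)-\nabla f(\x_k)\|^2]\leq\frac{L^2\beta^2((1-\beta)s-1)^2\alpha^2(G^2+\sigma^2)}{(1-\beta)^4}.
\]
The weighted Jensen step is precisely what sidesteps the correlation difficulty, trading the (unavailable) variance cancellation for the harmless extra factor $\sum_\tau w_\tau\leq 1/(1-\beta)$, which is already accounted for by the $(1-\beta)^4$ in the denominator.
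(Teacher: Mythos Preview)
Your proposal is correct and follows essentially the same argument as the paper: reduce to $\E[\|\v_k\|^2]$ via smoothness and $\p_k=\tfrac{\beta}{1-\beta}\v_k$, unroll the recursion~(\ref{eqn:rec2}) to write $\v_k$ as a geometric sum of past stochastic gradients, apply the weighted Jensen/Cauchy--Schwarz inequality to the sum, and then use $\E[\|\G(\x_\tau)\|^2]\leq G^2+\sigma^2$ termwise. Your explicit remark that Jensen sidesteps the correlation among the $\G(\x_\tau)$ is exactly the point, and the paper handles it the same way (without commenting on it).
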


\begin{proof}
\begin{align*}
&\|\nabla f(\z_k) - \nabla f(\x_k)\|^2\leq L^2\|\z_k - \x_k\|^2 = L^2 \|\p_k\|^2     \\
& \overset{\text{(\ref{eqn:v})}}{=}   \frac{L^2\beta^2}{(1-\beta)^2}\E[\|\v_k\|^2]   .\\  
\end{align*}
Recall the recursion in~(\ref{eqn:rec2}):
\[
\v_{k+1} = \beta\v_k +  ((1-\beta)s-1)\alpha \G_k  .
\]
Note that $\v_0=0$.  Denote by $\hat\alpha = \alpha ((1-\beta)s-1)$.
By induction,  we can show that
\[
\v_k = \hat\alpha\sum_{i=0}^{k-1}\beta^{k-1-i}\G_i=  \hat\alpha \sum_{i=0}^{k-1}\beta^{i}\G_{k-1-i}
\]
Let $\Gamma_{k-1}= \sum_{i=0}^{k-1}\beta^i = \frac{1-\beta^k}{1-\beta}$. Then
\begin{align*}
\|\v_k\|^2&=\left\|\sum_{i=0}^{k-1}\frac{\beta^i}{\Gamma_{k-1}}\hat\alpha \G_{k-1-i}\right\|^2\Gamma_{k-1}^2\\
&\leq\Gamma_{k-1}^2 \sum_{i=0}^{k-1}\frac{\beta^i}{\Gamma_{k-1}}\hat\alpha^2\|\G_{k-1-i}\|^2\\
&=\Gamma_{k-1}\sum_{i=0}^{k-1}\beta^i\hat\alpha^2\|\G_{k-1-i}\|^2  .
\end{align*}
Then
\begin{align*}
&\E[\|\v_k\|^2]\leq \Gamma_{k-1}\sum_{i=0}^{k-1}\beta^i\hat\alpha^2(G^2 + \sigma^2)\\
&= \Gamma_{k-1}^2\hat\alpha^2(G^2+\sigma^2)\leq\frac{\alpha^2((1-\beta)s-1)^2(G^2+\sigma^2)}{(1-\beta)^2}  .
\end{align*}
Then
\begin{align*}
\|\nabla f(\z_k)& - \nabla f(\x_k)\|^2
 \leq \frac{L^2\beta^2}{(1-\beta)^2}\E[\|\v_k\|^2]\\
& \leq \frac{L^2\beta^2((1-\beta)s-1)^2\alpha^2(G^2+\sigma^2)}{(1-\beta)^4}  .
\end{align*}
\end{proof}

Here we re-present Theorem~\ref{thm:3}.

\begin{rethm}~\label{proof:thm:3}
(Convergence of SUM)
Suppose $f(\x)$ is a non-convex and $L$-smooth function, $\E[\|\G(\x; \xi) - \nabla f(\x)\|^2]\leq \sigma^2$ and $\|\nabla f(\x)\|\leq G$ for any $\x$.
 Let update~(\ref{eqn:um}) run for $t$ iterations with $\G(\x_k; \xi_k)$. By setting $\alpha =\min\{\frac{1-\beta}{2L}, \frac{C}{\sqrt{t+1}}\}$ we have
\begin{align*}
&\min_{k=0,\ldots, t}\E[\|\nabla f(\x_k)\|^2]\\
&\leq \frac{2(f(\x_0) - f_*)(1-\beta)}{t+1}\max\left\{\frac{2L}{1-\beta}, \frac{\sqrt{t+1}}{C}\right\} \\
&+ \frac{C}{\sqrt{t+1}}\frac{L\beta^2((1-\beta)s -1)^2(G^2+\sigma^2) + L\sigma^2(1-\beta)^2}{(1-\beta)^3} .
\end{align*}
\end{rethm}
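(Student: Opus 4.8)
The plan is to fuse the two preparatory lemmas into a single one-step descent inequality for the shifted iterates $\z_k = \x_k + \p_k$, telescope it over $k=0,\dots,t$, and then insert the prescribed step size. First I would substitute the bound of Lemma~\ref{lem:2} on $\E[\|\nabla f(\z_k)-\nabla f(\x_k)\|^2]$ into the leading term on the right-hand side of Lemma~\ref{lem:1}. This eliminates the drift $\z_k-\x_k$ caused by the momentum and yields an inequality of the shape
\[
\E[f(\z_{k+1})-f(\z_k)] \leq \left(\frac{L\alpha^2}{(1-\beta)^2}-\frac{\alpha}{1-\beta}\right)\E[\|\nabla f(\x_k)\|^2] + \frac{L\beta^2((1-\beta)s-1)^2\alpha^2(G^2+\sigma^2)}{2(1-\beta)^4}+\frac{L\alpha^2\sigma^2}{2(1-\beta)^2}.
\]

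Next I would use the upper branch of the step-size constraint, $\alpha\leq\frac{1-\beta}{2L}$, which forces $\frac{L\alpha^2}{(1-\beta)^2}\leq\frac{\alpha}{2(1-\beta)}$ and hence bounds the coefficient of $\E[\|\nabla f(\x_k)\|^2]$ by $-\frac{\alpha}{2(1-\beta)}$. Isolating the gradient-norm term and summing over $k=0,\dots,t$ telescopes the function-value differences on the left. Since $\p_0=0$ gives $\z_0=\x_0$ and $f(\z_{t+1})\geq f_*$, the telescoped quantity is at most $f(\x_0)-f_*$, while each of the two constant terms picks up a factor $t+1$.

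Then I would lower-bound $\sum_{k=0}^{t}\E[\|\nabla f(\x_k)\|^2]$ by $(t+1)\min_k\E[\|\nabla f(\x_k)\|^2]$, divide through by $\frac{\alpha(t+1)}{2(1-\beta)}$, and merge the momentum and noise contributions over the common denominator $(1-\beta)^3$ (using $\sigma^2=\sigma^2(1-\beta)^2/(1-\beta)^2$ to align the powers). This produces the intermediate bound
\[
\min_k\E[\|\nabla f(\x_k)\|^2] \leq \frac{2(1-\beta)(f(\x_0)-f_*)}{\alpha(t+1)} + \frac{L\alpha}{(1-\beta)^3}\Big[\beta^2((1-\beta)s-1)^2(G^2+\sigma^2)+\sigma^2(1-\beta)^2\Big].
\]

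Finally I would plug in $\alpha=\min\{\frac{1-\beta}{2L},\frac{C}{\sqrt{t+1}}\}$, reading the two branches separately: in the first term $1/\alpha=\max\{\frac{2L}{1-\beta},\frac{\sqrt{t+1}}{C}\}$ reproduces the stated maximum, and in the second term the monotone bound $\alpha\leq C/\sqrt{t+1}$ converts the free $\alpha$ factor into $C/\sqrt{t+1}$, yielding the claimed $O(1/\sqrt{t})$ rate. I do not expect a genuine obstacle here; the only delicate point is bookkeeping, namely keeping the powers of $(1-\beta)$ straight when combining the noise and momentum terms, and correctly exploiting the two-sided role of the $\min$ in $\alpha$ (the $\frac{1-\beta}{2L}$ branch controls the $1/\alpha$ factor and supplies the negative descent coefficient, while the $C/\sqrt{t+1}$ branch controls the $\alpha$ factor in the variance term).
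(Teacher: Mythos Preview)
Your proposal is correct and follows essentially the same route as the paper: combine Lemma~\ref{lem:1} with Lemma~\ref{lem:2} into a one-step descent inequality for $\z_k=\x_k+\p_k$, use $\alpha\le\frac{1-\beta}{2L}$ to make the coefficient of $\E[\|\nabla f(\x_k)\|^2]$ at most $-\frac{\alpha}{2(1-\beta)}$, telescope using $\z_0=\x_0$ and $f(\z_{t+1})\ge f_*$, and finally insert $\alpha=\min\{\frac{1-\beta}{2L},\frac{C}{\sqrt{t+1}}\}$ via $1/\alpha=\max\{\cdot\}$ and $\alpha\le C/\sqrt{t+1}$. The only cosmetic difference is that the paper names the coefficient and the constant term $B,B'$ and telescopes before applying the bound $B\ge\frac{\alpha}{2(1-\beta)}$, whereas you simplify the coefficient first; the resulting intermediate bound and final inequality are identical.
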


\begin{proof}\label{proof:theorem1}
Let $B, B'$ be defined as
\begin{align*}
B &=  \frac{\alpha}{(1-\beta)} - \frac{L\alpha^2}{(1-\beta)^2}>0\\
B'&=\frac{L\beta^{2}((1-\beta)s-1)^2\alpha^2(G^2+\sigma^2)}{2(1-\beta)^4} + \frac{L\alpha^2\sigma^2}{2(1-\beta)^2}  .
\end{align*}
Lemma~\ref{lem:1} and Lemma~\ref{lem:2} imply that
\begin{align*}
\E[f(\z_{k+1}) - f(\z_k)]\leq -B\E[\|\nabla f(\x_k)\|^2]  +B'  .
\end{align*}
By summing the above inequalities for $k=0,\ldots, t$ and noting that $\alpha< \frac{1-\beta}{L}$,
\begin{align*}
&B\sum_{k=0}^t\E[\|\nabla f(\x_k)\|^2]\leq \E[f(\z_0) - f(\z_{t+1})] + (t+1)B'\\
&\leq \E[f(\z_0) - f_*] + (t+1)B'   .
\end{align*}
Then
\begin{align*}
\min_{k=0,\ldots, t}\E[\|\nabla f(\x_k)\|^2]\leq \frac{f(\z_0) - f_*}{(t+1)B} + \frac{B'}{B}.
\end{align*}
Assume $\alpha \leq \frac{1-\beta}{2L}$, then  $B =\frac{\alpha}{1-\beta} - \frac{\alpha^2L }{(1-\beta)^2} \geq \frac{\alpha}{2(1-\beta)}$.
Then
\begin{align}
\min_{k=0 ,\ldots, t} & \E[\|\nabla f(\x_k)\|^2] \nonumber\\
          & \leq \frac{2(f(\z_0) - f_*)(1-\beta)}{\alpha (t+1)} + \frac{2(1-\beta)}{\alpha}B' .
\end{align}
Noting that $\alpha =\min\{\frac{1-\beta}{2L}, \frac{C}{\sqrt{t+1}}\}$, we can have
\begin{align*}
&\min_{k=0,\ldots, t}\E[\|\nabla f(\x_k)\|^2]\\
&\leq \frac{2(f(\z_0) - f_*)(1-\beta)}{t+1}\max\left\{\frac{2L}{1-\beta}, \frac{\sqrt{t+1}}{C}\right\}\\
& + \frac{C}{\sqrt{t+1}}\frac{L\beta^{2}((1-\beta)s-1)^2(G^2+\sigma^2) + L(1-\beta)^2\sigma^2}{(1-\beta)^3} .
\end{align*}
We then complete the proof by noting that $\z_0 = \x_0$.
\end{proof}

\section{Proof of Theorem 2}

As in Section~\ref{section:proof:thm:3}, with a slightly different analysis from that of Lemma~\ref{lem:1}, we can have the following lemma.

{\bf{Lemma 5}}
\label{lem:3}
\emph{
Let $\z_k = \x_k + \p_k$. For SUM,  we have for any $k\geq 0$,
}
\begin{align*}
&\E[f(\z_{k+1})- f(\z_k)] \leq \frac{L\alpha^2\sigma^2}{2(1-\beta)^2} \\
&+ \frac{1}{2L((1-\beta)s-1)^2}\E [\| \nabla f(\z_k) - \nabla f(\x_k)\|^2]\\
&+ \left(\frac{[1+((1-\beta)s-1)^2]\alpha^2L}{2(1-\beta)^2} -  \frac{\alpha}{1-\beta}\right)\E[\|\nabla f(\x_k)\|^2] .
\end{align*}

\begin{proof}
We can follow the same analysis as in the proof of Lemma~\ref{lem:1} and get \begin{align*}
&\E[f(\z_{k+1})- f(\z_k)]\\
&\leq  \E\left[ -\frac{\alpha}{1-\beta} (\nabla f(\z_k) - \nabla f(\x_k))^{\top}\nabla f(\x_k) \right] \\
&+ \left(\frac{L}{2}\frac{\alpha^2}{(1-\beta)^2} -  \frac{\alpha}{1-\beta}\right)\E[\|\nabla f(\x_k)\|^2] + \frac{L\alpha^2\sigma^2}{2(1-\beta)^2}\\
&\leq\frac{1}{2}\E \left[\frac{1}{L((1-\beta)s-1)^2}\| \nabla f(\z_k) - \nabla f(\x_k)\|^2 \right.\\
&\left.+ \frac{L\alpha^2((1-\beta)s-1)^2}{(1-\beta)^2}\|\nabla f(\x_k)\|^2\right]\\
&+ \left(\frac{L}{2}\frac{\alpha^2}{(1-\beta)^2} -  \frac{\alpha}{1-\beta}\right)\E[\|\nabla f(\x_k)\|^2] + \frac{L\alpha^2\sigma^2}{2(1-\beta)^2}\\
&=\frac{1}{2L((1-\beta)s-1)^2}\E[\| \nabla f(\z_k) - \nabla f(\x_k)\|^2]\\
&+ \left(\frac{\alpha^2L[1+((1-\beta)s-1)^2]}{2(1-\beta)^2} -  \frac{\alpha}{1-\beta}\right)\E[\|\nabla f(\x_k)\|^2] \\
&+ \frac{L\alpha^2\sigma^2}{2(1-\beta)^2}   .
\end{align*}
\end{proof}

With Lemma~5 and Lemma~\ref{lem:2} and a similar analysis as that for Theorem~\ref{thm:3}, we can easily prove Theorem~\ref{thm:4}.

\section{Proof of Proposition~\ref{prop:ge}}

We re-present Proposition~\ref{prop:ge}.

\begin{prop3*}\label{reprop:ge}
Assume that $\|\nabla\ell(\x, \data)\|_2\leq G$ for any $\x$ and $\data$ and $\ell(\x,\data)$ is $L$-smooth w.r.t $\x$. For two data sets $\S,\S'$ that differs at one example, let $\x_t$ and $\x'_t$ denote the $t$-th iterates of running SUM for the empirical risk minimization on $S$ and $S'$, we have
\[
\Delta_{t+1} \leq \sum_{k=0}^{t}\frac{2\alpha G}{n}\eta^t_k + \left(1-\frac{1}{n}\right)  \sum_{k=0}^{t}
\alpha L\eta_k^t \Delta_k
\]
with $\Delta_0=0$, where $
\eta_k^t = \frac{1}{1-\beta} - \beta^{t-k+1}\frac{1-s(1-\beta)}{1-\beta}$.
\end{prop3*}

\begin{proof}\label{proof:proposition2}
Recall that in Lemma~\ref{lem:cm}, we have
\begin{align*}
\x_{t+1} & = \x_{0} - \sum_{k=0}^{t} \eta^{t}_{k}\alpha \G(\x_{k}) \\
&=\x_{0} - \sum_{k=0}^{t} \eta^{t}_{k}\alpha \nabla \ell(\x_{k}, \data_{i_k})  ,
\end{align*}
where $\eta^{t}_{k} = \frac{1}{1-\beta} - \beta^{t-k+1}\frac{1-s(1-\beta)}{1-\beta}.$

Then we could upper bound of $\Delta_{t+1}$ as follows
\begin{align}\label{eq:proof_proposition2_1}
  \Delta_{t+1}
               & =    \E \| \x_{t+1} - \x_{t+1}' \|  \nonumber\\
               & =    \E \| \x_{0} - \sum_{k = 0}^{t} \eta^{t}_{k} \alpha \nabla \ell(\x_{k}, \data_{i_{k}}) - (\x_{0}' - \sum_{k = 0}^{t} \eta^{t}_{k}\alpha \nabla\ell(\x_{k}', \data_{i_{k}'}) ) \|  \nonumber\\
               & \leq \E \| \x_{0} - \x_{0}' ||_{2} + \sum_{k = 0}^{t}\eta^{t}_{k} \alpha \E \| \nabla\ell(\x_{k}, \data_{i_{k}}) - \nabla\ell(\x_{k}', \data_{i_{k}'})||   \nonumber\\
               & =    \sum_{k = 0}^{t}\eta^{t}_{k} \alpha \E || \nabla\ell(\x_{k}, \data_{i_{k}}) - \nabla\ell(\x_{k}', \data_{i_{k}'})\| \nonumber\\
\end{align}
To bound the expectation term on the R.H.S, we can use the fact that with probability $1/n$, $i_k\neq i_{k}'$ and $\| \nabla\ell(\x_{k}, \data_{i_{k}}) - \nabla\ell(\x_{k}', \data_{i_{k}'})\|\leq 2G$ due to Lipschitz continuity of $\ell(\x,\data)$, and with probability $1-1/n$, $i_k=i_{k}'$ and $\| \nabla\ell(\x_{k}, \data_{i_{k}}) - \nabla\ell(\x_{k}', \data_{i_{k}'})\|\leq L\|\x_k-\x_k'\|$ due to the smoothness of $\ell(\x,\z)$. Therefore,
\begin{align}
               \Delta_{t+1}& \leq     \sum_{k = 0}^{t}\eta^{t}_{k} \alpha \left\{ (1-\frac{1}{n})L \E || \x_{k} - \x_{k}' ||_{2} + \frac{2G}{n} \right\} \nonumber\\
               & =    \sum_{k = 0}^{t}\eta^{t}_{k} \alpha \left\{ (1-\frac{1}{n})L \Delta_{k} + \frac{2G}{n} \right\} \nonumber\\
               & =    \sum_{k = 0}^{t} \frac{2\alpha G}{n} \eta^{t}_{k} + \Big( 1 - \frac{1}{n} \Big)  \sum_{k=0}^{t}\alpha L \eta^{t}_{k}  \Delta_{k}.
\end{align}
\end{proof}

\end{document}